\newcommand{\eg}{\textit{e.g.}\@\xspace}
\newcommand{\ie}{\textit{i.e.}\@\xspace}
\newcommand{\iid}{i.i.d.\@\xspace}
\newcommand{\mc}{\mathcal}
\newcommand{\R}{\mathbb{R}}
\newcommand{\NN}{\mathbb{N}}
\newcommand{\EE}{\mathbb{E}}
\newcommand{\prob}{\mathbb{P}}
\newcommand{\dint}{\;{\rm d}}
\newcommand{\var}{{\rm Var}}
\newcommand{\dzvector}[1]{\boldsymbol{\mathbf{#1}}}
\renewcommand{\vec}[1]{\dzvector{#1}}
\newcommand{\sign}{{\rm sgn}}
\definecolor{thmcol}{HTML}{224D7F}
\newtheoremstyle%
 {redthm}%
 {}{}%
 {\itshape}
 {}%
 {\color{thmcol}\large\bfseries}%
 {\color{thmcol}.}%
 { }{}
\newtheorem{theorem}{Theorem}
\newtheorem{lemma}{Lemma}
\newtheorem{remark}{Remark}
\newenvironment{enumproof}
  { \begin{enumerate}[label={\textbf{\arabic*)}}, wide, labelwidth=!, labelindent=0pt] }
  { \end{enumerate} }
\newcommand{\x}{{\vec x}}
\newcommand{\X}{{\vec X}}
\newcommand{\W}{{\vec W}}
\newcommand{\st}{\;\big|\;}
\newcommand{\spa}{{\rm sp}}
\newcommand{\tim}{{\rm tm}}
\title{AZ-whiteness test: a test for uncorrelated noise on spatio-temporal graphs}
\author[1]{Daniele Zambon\thanks{Corresponding author, \texttt{daniele.zambon@usi.ch} .}}
\author[1,2]{Cesare Alippi}
\affil[1]{The Swiss AI Lab IDSIA \& Universit\`a della Svizzera italiana, Switzerland.}
\affil[2]{Politecnico di Milano, Italy.}
\begin{document}

\maketitle

\begin{abstract}
\noindent
We present the first whiteness test for graphs, i.e., a whiteness test for multivariate time series associated with the nodes of a dynamic graph. The statistical test aims at finding serial dependencies among close-in-time observations, as well as spatial dependencies among neighboring observations given the underlying graph.
The proposed test is a spatio-temporal extension of traditional tests from the system identification literature and finds applications in similar, yet more general, application scenarios involving graph signals.
The AZ-test is versatile, allowing the underlying graph to be dynamic, changing in topology and set of nodes, and weighted, thus accounting for connections of different strength, as is the case in many application scenarios like transportation networks and sensor grids. 
The asymptotic distribution --- as the number of graph edges or temporal observations increases --- is known, and does not assume identically distributed data.
We validate the practical value of the test on both synthetic and real-world problems, and show how the test can be employed to assess the quality of spatio-temporal forecasting models by analyzing the prediction residuals appended to the graphs stream.
\end{abstract}

\section{Introduction}

In recent years, machine learning methods based on graph-structured data have made significant advances in the field of multivariate time-series analysis and resulted in major achievements, \eg, in forecasting performance and missing values imputation \cite{wu2019graph,cini2021filling}.
In this paper, we address the problem of testing whether given time series can be considered white noise or not and, by that, we aim at assessing the optimality of machine learning models trained to solve associated forecasting tasks.

We focus on spatio-temporal time series where the spatial domain is defined by a graph $G=(V,E)$ and stochastic time series (or {node signals}) $\x_v[t] \in \R^F$, with $t=1,2,\dots,$ are associated with the nodes $v\in V$ of $G$.
That said, the framework can be extended to deal with graphs seen as a realization of a random variable.
Graph $G$ can be static, meaning that is constant over time, or dynamic, hence modeling frameworks where topology and number of nodes can change. 
Dynamic graphs appear frequently in cyber-physical systems where sensors can be added or removed, or node data is missing for certain lapses of time following communication problems or faults. Another example is provided by social networks where users expand their set of friends or change preferences and interests.  
It is also extremely important to consider graphs with weighted edges encoding, for instance, the capacity or the strength of the link.
Figure~\ref{fig:graph-signal} provides a visualization of a dynamic graph $G$ with associated graph signal $\X$, that is, the collection of all node signals $\x_v[t]$, for nodes $v$ and time steps $t$.

\begin{figure}
    \centering
    \includegraphics[width=\textwidth]{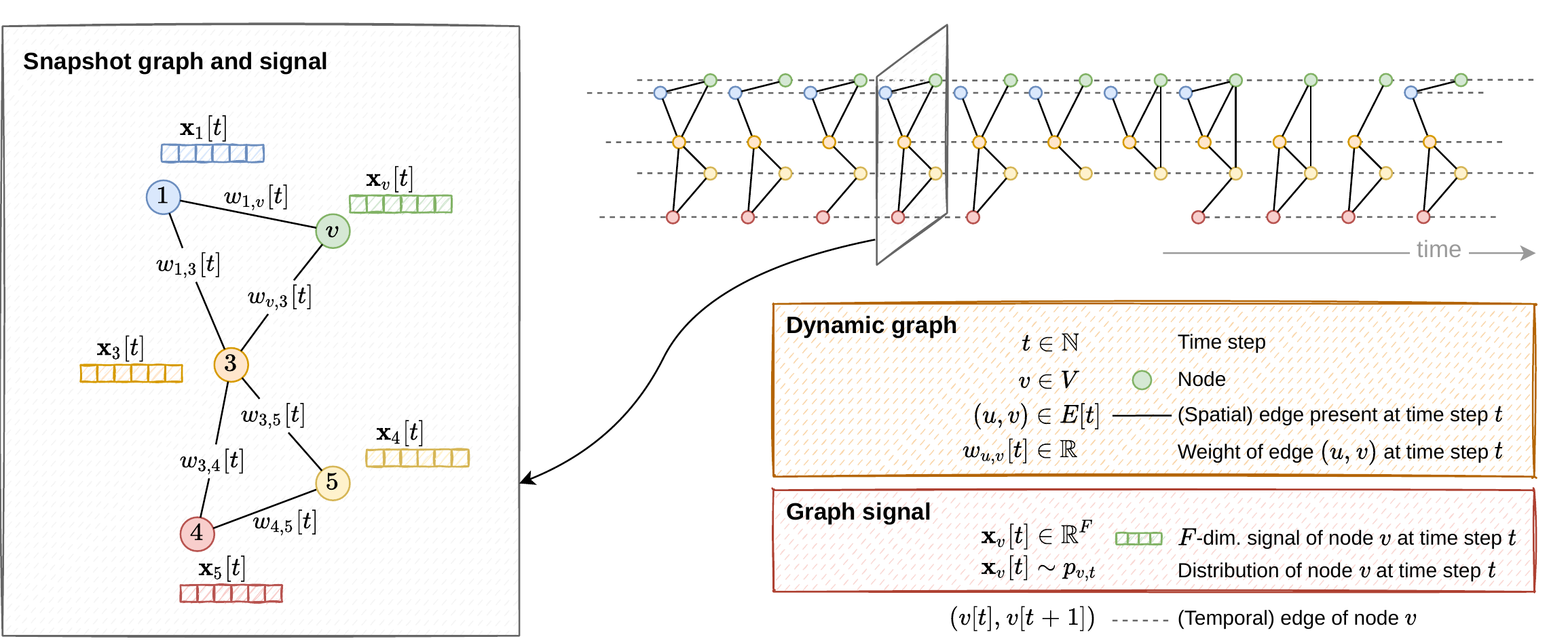}
    \caption{A dynamic weighted graph and a graph signal. The dynamic graph is defined over a set $V$ of nodes; without requesting all nodes to be always available over time. The topology of the graph is represented by solid lines. Dashed lines represent temporal edges connecting the same node at consecutive time steps. The graph signal at generic time step $t$ and node $v$ is multivariate. On the left-hand side, a single snapshot of the temporal graph and graph signal are extracted.}
    \label{fig:graph-signal}
\end{figure}

In the presented setting, we address the problem of assessing the optimality of a predictive model $f_\theta$ trained to solve a forecasting task associated with stochastic graph signal $\X$. We tackle this problem by inspecting the graph signal $\vec R$ composed of the residuals $\vec r_v[t] = \hat \x_v[t] - \x_v[t]$ between the values estimated by model $f_\theta$ and the measured ones in $\X$, respectively.
Indeed, finding data dependency within $\vec R$ entails that there is further structural information that can be learned from the data and, therefore, the predictive model $f_\theta$ can be improved by relying on a better training procedure or considering a more appropriate family of models.  

We propose a novel whiteness test to decide whether a signal on a graph $G$ can be considered white noise or there is prominent evidence of either serial or spatial correlations among data observations. 
In general, testing the assumption of independent signals is ill-posed as it requires studying a number of unknown distributions $p_{v,t}$ (associated with each $\x_v[t]\in\X$) proportional to the number of nodes and time steps, and for each of which we have only a single observation. The test we propose extends traditional methods 
for serial correlation \cite{ljung1978measure,hosking1980multivariate,li2019testing} that study the autocorrelation of the signals at different time lags.
In this paper, we provide methods allowing for testing the whiteness of the graph signal by exploiting known relational inductive biases existing among the data observations represented by the given graph topology and the temporal coherence. In doing so, we expand the analysis of serial correlation to the more general case of spatio-temporal correlation. To the best of our knowledge, this is the first test proposing such a general whiteness test.
The proposed test statistic is based on counting the positive and negative signs of the product between spatially and temporally adjacent observations, whose disproportion indicates either direct or inverse correlation among the variables. Indeed,  authors can consider different statistics.
In summary, the contributions of the paper are the following:
\begin{itemize}
    \item We propose the first statistical test in the literature to check the whiteness hypothesis for a graph signal $\X$
    defined over a, possibly weighted and dynamic, graph $G$ [Sections~\ref{sec:static-test} and \ref{sec:spatio-temporal-test}].
    
    \item We derive the limit distribution of the test statistics under the null hypothesis 
    [Theorem~\ref{theo:test-stat}]. 

    \item We present a statistical procedure based on the proposed test to assess whether a given forecasting model can be considered optimal with respect to the given data or not [Section~\ref{sec:exp:optimality}]. 
    
\end{itemize}

\begin{table}
\caption{List of possible configurations to which the proposed whiteness test is applicable.}
\label{tab:configurations}
\centering
\begin{tabular}{@{}lcccc@{}}
\toprule
\multicolumn{1}{r}{\textbf{Aspect:}} & Temporal dimension    & Edge weights & Node signals   & Neighborhood    \\ \midrule
\multirow{3}{*}{\textbf{Options:}}   & Absent                & Absent       & Scalar ($F=1$) & $1$-hop         \\
                                     & $T>0$, $G$ is static  & Present      & $F>1$          & $K$-hop ($K>1$) \\
                                     & $T>0$, $G$ is dynamic &              &                &                 \\ 
\bottomrule
\end{tabular}
\end{table}

The proposed whiteness test is computationally efficient for sparse graphs as the number of operations scales linearly with the number of edges and time steps. 
Moreover, the test is very general and allows to incorporate very relevant application contexts and operational scenarios as listed in Table~\ref{tab:configurations}.
In particular, the test is applicable when $\x_v[t]$ is univariate or multivariate, when $G$ is static or dynamic, when the edges of $G$ are weighted, and when the graph signal is composed of a set of time series or it is static, namely, $\X=\{\x_v\in\R^F\st v\in V\}$, and there is no temporal dimension involved.

The remainder of the paper is structured as follows.
Section~\ref{sec:related-work} reviews the related work.
Section~\ref{sec:static-test} presents the test in the simplified case of static graph and signal.
Section~\ref{sec:spatio-temporal-test} shows how the test from Section~\ref{sec:static-test} can be applied to spatio-temporal signals on dynamic graphs.
Section~\ref{sec:experiments} reports empirical evidence of the statistical power of the test and shows how to assess the optimality of forecasting models by applying the test to the prediction residuals.
Finally, Section~\ref{sec:conclusions} draws some conclusions and provides pointers to future research.

\section{Related Work}
\label{sec:related-work}

Among the most renowned whiteness tests, there are the Durbin-Watson test \cite{durbin1950testing} and the Ljung-Box test \cite{ljung1978measure}. Both have been introduced to test serial dependency in a univariate time series. In particular, the Ljung-Box test \cite{ljung1978measure} is able to inspect the autocorrelation of the signal at multiple time lags at the same time; these types of tests are known as ``portmanteau'' tests. More recently, \citet{drouiche2000new} proposed a test that, differently from the previous ones, operates on the spectral density of the signal which, under the null hypothesis of white noise, has to be constant.

Several whiteness tests for multivariate data have been developed as extensions of the Box-Pierce or Ljung-Box tests \cite{chitturi1974distribution,hosking1980multivariate,li1981distribution}. These tests operate in the asymptotic regime where the data dimensionality, say $D$, is negligible with respect to the number $T$ of available temporal observations. More recently, the literature has proposed tests overcoming the lack of power and poor approximation of the critical region. Examples are the test by \citet{li2019testing,bose2020whiteness} operating under the condition where the ratio $D/T$ approaches a constant, as $T$ approaches infinity.

To the best of our knowledge, no work has presented a whiteness test for spatio-temporal signals, and what proposed here is pioneering in this direction.

For other purposes, few tests consider a graph structure to represent relationships among the given observations. 
For example, Friedman-Rafsky test \cite{friedman1979multivariate}  compares the distribution of two given sets of data by constructing a minimum spanning tree connecting the given data. The test statistic is based on counting the number of connected components (trees) left after removing the edges linking data points coming from different samples. Friedman-Rafsky test generalizes the run test by \citet{wald1940test}.
Other statistical tests based on topological information computed from the data have been proposed by \citet{rosenbaum2005exact} and \citet{chen2013graph}.

In this paper, we take inspiration from Geary test \cite{geary1970relative} for uncorrelated residuals in univariate data that counts the number of sign changes between consecutive observations. \citet{geary1970relative} shows that, empirically, the test performance was on par with more sophisticated tests, such as the Durbin-Watson test \cite{durbin1950testing}. 
Geary's comparison considered another test, the Wald-Wolfowitz run test \cite{wald1940test} for scalar sequences, which counts the number of runs (subsequences of values all of the same sign) and, hence, closely related to the number sign changes. It turned out that the Walf-Wolfowitz test performed slightly worse than the Geary test, even though the run test does not assume that the observations have an equal probability of being positive and negative.

The statistical test we propose shares the fundamental idea of counting signs of the Geary test, but introduces several advancements that make it a substantial and original contribution. Apart from the fact we are considering graphs, which is a major contribution per se, the most important result is the applicability of the test to spatio-temporal signals defined over graphs. Secondly, the designed test is general enough to operate on weighted and dynamic graphs with multivariate node signals.

\section{Whiteness test for graph signals}
\label{sec:static-test}

\begin{wrapfigure}[23]{R}{.32\textwidth}
    \centering
    \includegraphics[width=.33\textwidth]{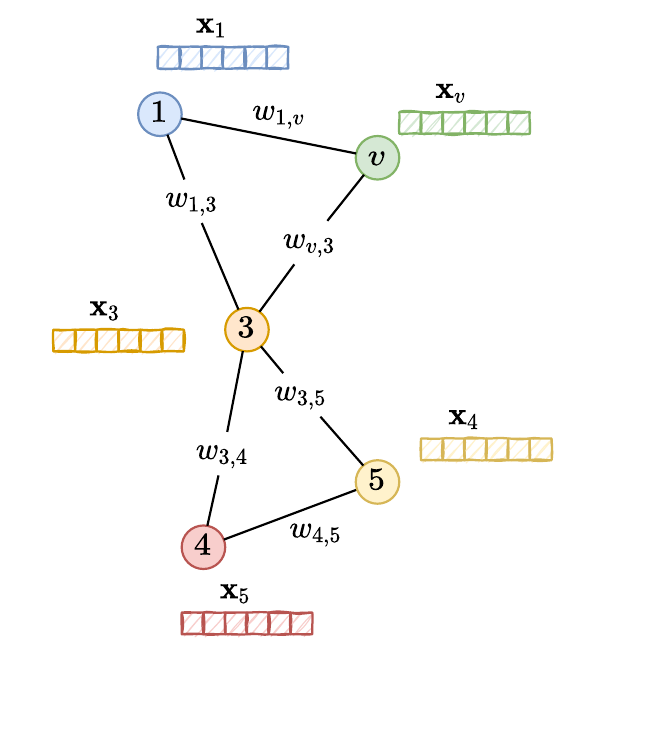}
    \vspace{-1cm}
    \caption{A static graph signal is defined over a weighted graph $G=(V,E,\W)$. Multivariate node signals $\x_v$, $v\in V$, are generic random vectors and, here, not associated with time series.}
    \label{fig:graph-signal-static}
\end{wrapfigure}

For the sake of clarity, we first present the test in a simplified --- yet relevant --- setting characterized by a static weighted graph and a static signal with a single observation (scalar or vector) associated with each node; an example is given in Figure~\ref{fig:graph-signal-static}.  
The most general case where time is involved and the graph is dynamic, as in the scenario depicted in Figure~\ref{fig:graph-signal}, is provided in next Section~\ref{sec:spatio-temporal-test} as an extension of what we are presenting here. In particular, we will see that we can apply the test here designed by considering a suitably constructed (multiplex) graph involving temporal edges alongside the spatial edges present in the original graph.

Consider a weighted graph $G=(V,E,\W)$ defined over node set $V$, edge set $E\subseteq V\times V$ and scalar weights $$\W=\{w_{e}\in\R_+\st e\in E\}.$$ The graph topology and weights define the spatial structure underlying a graph signal $$\X=\{\x_v\in\R^F\st v\in V\}$$ defined over the nodes of $G$.
We assume that the edge weights are positive values and encode the strength or the capacity of the links; without loss of generality, we consider absent those edges characterized by null weight. 
Node signals $\x_v\in\R^F$ can be scalars ($F=1$) or vectors ($F>1$), but they are static, meaning that no temporal information is associated with them, as shown in Figure~\ref{fig:graph-signal-static}. Essentially, we consider a single snapshot of the dynamic graph depicted in Figure~\ref{fig:graph-signal}. 

The ultimate goal is to test if \emph{graph} signal $\X$ is white noise, implying that there is no further dependency left among node signals in $\X$. 
The statistical hypotheses of the test are 
\begin{equation}
\label{eq:test-H0-H1}
\begin{cases}
H_0:& 
\x_u,\x_v \text{ are independent for all } u\ne v\in V;
\\
H_1:& 
\x_u,\x_v \text{ are dependent for some } u\ne v\in V.
\end{cases}
\end{equation}
We propose the following whiteness test 
\begin{equation}
\label{eq:test-static}
\text{If } |C_G(\X)| > \gamma \implies \text{ Reject null hypothesis } H_0,
\end{equation}
based on statistic
\begin{align}
\label{eq:test-stat}
C_G(\X) &= \frac{\widetilde C_G(\X)}{W_2^{{1}/{2}}}.
\end{align}
Statistic $C_G(\X)$ is asymptotically distributed a standard Gaussian $\mc N(0,1)$ as $|E|\to \infty$ (we prove it in following Theorem~\ref{theo:test-stat}), 
and  is defined over quantities
\begin{align}
\label{eq:Ctilde}
\widetilde C_G(\X) &= \sum_{(u,v) \in E} w_{u,v}\;\sign(\x_u^\top \x_v),
\\
\label{eq:W2}
W_2 &= \sum_{{(u,v)\in E}\;:\,{u<v}} (w_{u,v} + w_{v,u})^2,
\end{align}
with $w_{u,v}=0$ if $(u,v) \not \in E$,   
and sign function 
$\sign(x) = 1$ if $x>0$, $\sign(x) = -1$ if $x<0$, and $\sign(x) = 0$ otherwise;
note that for undirected graphs $W_2$ reduces to $\sum_{(u,v)\in E} w_{u,v}^2$.

The intuition behind \eqref{eq:test-static} to test null hypothesis $H_0$ against $H_1$ in \eqref{eq:test-H0-H1} is that, when the node signals in $\X$ are mutually independent and centered around zero, then random variables $\sign(\x_u^\top \x_v)$ for all $u\ne v\in V$ should also be centered on zero, as we prove in Lemma~\ref{lemma:sign-dist}, Appendix~\ref{sec:proofs}. 
In particular, in the case of scalar signals, \ie, $F=1$, we can factorize $\sign(\x_u\,\x_v)=\sign(\x_u)\,\sign(\x_v)$ so that it
represents a sign change between the signals of nodes $u$ and $v$.
We observe that large values of $C_G(\X)\gg 0$ indicate the presence of few sign changes between observations of adjacent nodes, which in turn suggests correlation among variables. Similarly, inverse correlation is revealed by $C_G(\X)\ll 0$. Therefore, $|C_G(\X)| \gg 0$ is symptomatic of  dependent data, and justifies our test in \eqref{eq:test-static}. 
When $F>1$, we can interpret the scalar product  $\x_u^\top \x_v$ as a correlation between the node signals so that, under the null hypothesis, the expected value of the signs is null and the observed values of $C_G(\X)$ are around zero. 
Another interpretation is more geometrical and treats $\x_u, \x_v$ as points in a Euclidean space. Here, the sign of the scalar (inner) products $\x_u^\top \x_v$ indicates whether the two vectors point in similar or opposing directions, concluding once again that $|C_G(\X)|$ is expected to be small when the node signals are independent.

Next Theorem~\ref{theo:test-stat} supports the soundness of test \eqref{eq:test-static} and provides a distribution-free criterion to select threshold $\gamma$ granting a user-defined significance level $\alpha=\prob(\text{Reject }H_0| H_0)$, \eg, $\alpha=0.05$. 
\begin{theorem} 
\label{theo:test-stat}
Consider a weighted graph $G=(V,E,\W)$ without self-loops and a stochastic graph signal $\X=\{\x_v\in \R^F\st v\in V\}$ on it, with $\vec x_v\ne 0$ almost surely.
Under assumptions
\begin{enumerate}[label=\textnormal{({\bfseries Ass\arabic*})}, leftmargin=2cm]
\item \label{hp:null} 
$\{\x_v \st v\in V\}$ are all mutually independent (hypothesis $H_0$ in \eqref{eq:test-H0-H1}),
\item \label{hp:median} 
$\EE_{\x_v}\left[\sign(\bar \x^\top \x_v)\right]=0$ for all $\bar \x\in\R^F\setminus\{\vec 0\}$ and $v\in V$,
\item \label{hp:weights} 
$w_{u,v} \in (0, w_+]$ for all $(u,v)\in E$, and $W_2\to \infty$ as $|E|\to \infty$,
\end{enumerate}
the distribution of $C_G(\X)$ in \eqref{eq:test-stat} converges weakly to a standard Gaussian distribution $\mc N(0,1)$ as the number $|E|$ of edges goes to infinity.
\end{theorem}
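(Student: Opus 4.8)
The plan is to show that the normalized sum $C_G(\X)=\widetilde C_G(\X)/W_2^{1/2}$ is asymptotically standard normal by exhibiting it as a sum of martingale differences satisfying a martingale central limit theorem. Write $S_{u,v}=\sign(\x_u^\top\x_v)$, so that $\widetilde C_G=\sum_{(u,v)\in E}w_{u,v}S_{u,v}$. First I would pin down the first two moments. By \ref{hp:null} and \ref{hp:median}, conditioning on $\x_u$ (nonzero almost surely) gives $\EE[S_{u,v}\mid\x_u]=\EE_{\x_v}[\sign(\x_u^\top\x_v)]=0$, hence $\EE[S_{u,v}]=0$; Lemma~\ref{lemma:sign-dist} further rules out an atom of $\x_u^\top\x_v$ at $0$, so $\var(S_{u,v})=\EE[S_{u,v}^2]=1$. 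The same conditioning shows any two distinct edge terms are uncorrelated: if $(u,v)$ and $(u',v')$ share no node they are independent by \ref{hp:null}, and if they share exactly one node, conditioning on the signal at the shared node and applying \ref{hp:median} to each factor yields $\cov=0$. Since distinct edges of a simple graph share at most one node, I conclude $\EE[\widetilde C_G]=0$ and $\var(\widetilde C_G)=\sum_e w_e^2=W_2$, so $C_G$ is already centred with unit variance and only the asymptotic shape remains.

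For the limit law I would use a martingale CLT. Fix an ordering $v_1,\dots,v_n$ of $V$ and reveal the node signals one at a time, setting $\mc F_k=\sigma(\x_{v_1},\dots,\x_{v_k})$ and $D_k=\EE[\widetilde C_G\mid\mc F_k]-\EE[\widetilde C_G\mid\mc F_{k-1}]$. The key structural observation, again from \ref{hp:median}, is that an edge term $w_{e}S_{e}$ has conditional mean zero until the later of its two endpoints is revealed; consequently $D_k=\sum_{j<k:\,\{v_j,v_k\}\in E}w_{v_j,v_k}\,S_{v_j,v_k}$ collects exactly the edges joining $v_k$ to already-revealed nodes. Thus $C_G=W_2^{-1/2}\sum_k D_k$ is a sum of martingale differences, and it suffices to check (a) the conditional Lindeberg condition and (b) convergence of the normalized conditional variance $\tfrac1{W_2}\sum_k\EE[D_k^2\mid\mc F_{k-1}]\xrightarrow{p}1$.

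Condition (b) is where the dependence must be tamed. Expanding the conditional second moment, the diagonal contributions $\sum_{j<k}w_{v_j,v_k}^2\,\EE[S_{v_j,v_k}^2\mid\mc F_{k-1}]$ sum over all edges to give exactly $W_2$, while the off-diagonal contributions come from pairs of edges incident to $v_k$ from earlier nodes, \ie from ``cherries'' centred at $v_k$. These off-diagonal terms have zero mean by \ref{hp:median}, and a second-moment computation shows two cherry terms are uncorrelated unless their leaf pairs coincide; this collapses $\var\!\big(\sum_k\EE[D_k^2\mid\mc F_{k-1}]-W_2\big)$ to a weighted count of pairs of nodes joined through two distinct common neighbours, which the boundedness in \ref{hp:weights} forces to be $o(W_2^2)$, giving (b). For the Lindeberg condition I would bound $|D_k|$ by $w_+$ times the number of already-revealed neighbours of $v_k$ and show this is negligible relative to $W_2^{1/2}$.

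The hard part, and the reason the ordering cannot be arbitrary, is that both (a) and (b) deteriorate at high-degree nodes: revealing a hub last makes a single $D_k$ as large as the whole sum and inflates the cherry count. I would therefore take the ordering to be the reverse of a degeneracy ordering, so that every node has only a controlled number of earlier-revealed neighbours; this simultaneously keeps each $D_k$ bounded (securing Lindeberg) and limits the number of common later-neighbours of any pair (securing the conditional-variance estimate). The conceptual crux throughout is assumption \ref{hp:median}: it is precisely what makes every cross-covariance between node-sharing edge terms vanish, so that the line-graph dependence among the summands --- otherwise far too dense for a classical CLT --- contributes nothing to leading order and the Gaussian limit survives.
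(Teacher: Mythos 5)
Your route is genuinely different from the paper's, and the difference matters. The paper's central structural step (Lemma~\ref{lemma:sign-independency}) is that under \ref{hp:null} and \ref{hp:median} the sign variables of two \emph{adjacent} edges are not merely uncorrelated but independent: conditioning on the shared node's signal, each factor is a fair coin whatever the conditioning value, so the joint law factorizes exactly. The paper then treats $\widetilde C_G(\X)$ as a sum of independent, bounded, mean-zero summands and applies the classical Lindeberg--Feller CLT, with \ref{hp:weights} making the Lindeberg condition immediate. You establish only that adjacent edge terms are uncorrelated, miss this upgrade to independence, and then invest in a martingale CLT with a node-revealing filtration to tame the residual dependence. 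Your moment computations and the identification of the increments $D_k$ are correct; the problem is in closing conditions (a) and (b).

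The genuine gap is that neither the claimed bound in (b) nor the degeneracy-ordering fix works at the stated level of generality. The assertion that \ref{hp:weights} forces the ``cherry'' variance to be $o(W_2^2)$ is false: take $G=K_{2,m}$ with unit weights and scalar signals (degeneracy $2$, so your reordering changes nothing). Writing $\sigma_v=\sign(\x_v)$, the increments at the leaves are $D_k=(\sigma_a+\sigma_b)\sigma_k$, so $W_2^{-1}\sum_k\EE[D_k^2\mid\mc F_{k-1}]=(\sigma_a+\sigma_b)^2/2\in\{0,2\}$, which does not converge in probability to $1$; correspondingly $C_G(\X)=(\sigma_a+\sigma_b)\,(2m)^{-1/2}\sum_i\sigma_i$ converges to a nondegenerate mixture of a point mass at $0$ and $\mc N(0,2)$, not to $\mc N(0,1)$. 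For the complete graph one gets $C_G(\X)\to(\chi_1^2-1)/\sqrt2$. So no ordering can rescue (b) for hub-heavy or dense graph sequences, and your proof cannot be completed without an additional assumption controlling the maximum relative weighted degree (e.g., $\max_v\sum_{u}w_{u,v}^2=o(W_2)$). It is worth flagging that the same examples show the collection $\{s(e)\}$ is only \emph{pairwise} independent for graphs with cycles, whereas the paper's application of the independent-summands CLT tacitly requires mutual independence --- so the dependence you were trying to handle is real --- but as a proof of Theorem~\ref{theo:test-stat} as stated, your argument contains a step that fails on admissible instances.
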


In light of above Theorem~\ref{theo:test-stat}, threshold $\gamma$ is selected to be the quantile $1-\alpha/2$ of the standard Gaussian distribution so that
$$
\prob(|C_G(\X)| > \gamma \st H_0) = \alpha,
$$
thus ensuring to meet the user-defined significance level $\alpha$.
Before sketching the proof, we comment on the assumptions under which the theorem is derived.

\begin{remark}[Assumption of no self-loops and {$\prob(\x_v=0)=0$}]
\label{remark:hp:null}
Quantities $\sign(\x_v^\top\x_v)$ arising from self-loops do not carry relevant information to test the independence of node signals. Moreover, we could have expressed \eqref{eq:Ctilde} as a sum over all edges that are not self-loop. Therefore, the assumption of no self-loop is requested to simplify only the notation. 
Secondly, requesting null probability of null node signals $\x_v$ is another simplifying assumption that eliminates the probability that $\sign(\x_v^\top\bar \x)=0$. 
\end{remark}

\begin{remark}[Assumption~\ref{hp:median}]
\label{remark:hp:median}
Assumption~\ref{hp:median} requests that all node signals have a distribution yielding the same probability of being in either of the two half-spaces of $\R^F$ defined by the sign of the scalar product with $\bar \x$. 
Note that in the scalar case with $F=1$, \ref{hp:median} implies that $\prob(\x_v>0)=\prob(\x_v<0)$, which is equivalent to asking that the median of all $\x_v$ is zero.
Therefore it does not impose any real constraint on the data distribution because: (i) if the median is not zero, then the model is already not optimal in the sense of the mean absolute deviation and (ii) we can apply the test to the graph signal translated to have zero median.
Conversely, for $F>1$, it is not always possible to center the data so that \ref{hp:median} holds.%
    \footnote{A counterexample is given by considering all components of all node signals to be \iid with distribution $1/2\, U[-4,0) + 1/2\, U[0 ,1)$, \ie a mixture of two uniform distributions. For $F=2$, we have that if \ref{hp:median} holds for $\bar\x=[0,1]^\top$ and $[1,0]^\top$, then it cannot hold also for $\bar\x=[1,1]^\top$.}
Nevertheless, in this case we can perform separate tests, one for each of the $F$ components, and then either study them individually, employ some multiple hypothesis test correction (\eg, \cite{hochberg1988sharper}), or sum them together if the components are independent.
In any case, this assumption allows node signals to have different distributions. 
\end{remark}

\begin{remark}[Assumption~\ref{hp:weights}]
\label{remark:hp:weights}
Finally, in Assumption~\ref{hp:weights} we request the weights to be positive. We stress that, as already mentioned, the edge weights are assumed to encode the strength of the relation between the corresponding node so that higher weights imply a stronger impact on statistics \eqref{eq:test-stat}. An example is when weights come from the absolute value of Pearson's correlation between signals. 
Not rarely, however, the graph we are given does not come with edge weights or the provided edge attributes do not reflect the criterion assumed above. In all such situations, we can still apply test~\eqref{eq:test-static} straightforwardly considering that all weights are equal to $1$; accordingly, constant $W_2$ in \eqref{eq:W2} becomes equal to the number $|E|$ of edges in the graph. Other criteria to re-weight the graph can be designed for specific cases.
The assumption of bounded weights, instead, is technical and only takes part in the limit case of $|E|\to \infty$. Intuitively, \ref{hp:weights} ensures that all edges bring a tangible contribution to the final statistics $C_G(\X)$. The same theorem can be proven under milder assumptions; see Appendix~\ref{sec:proofs}. 
\end{remark}

\begin{proof}[Sketch of the proof.]
The proof of Theorem~\ref{theo:test-stat} is based on the fact that, under \ref{hp:null} and \ref{hp:median}, random variables $\sign(\x_v^\top \x_u)$, for all $(u,v)\in E$, are mutually independent, including edges that share one of the two ending nodes. 
It follows that statistic $\widetilde C_G(\X)$ in \eqref{eq:Ctilde} is a weighted sum of independent Bernoulli random variables. Finally, we prove that, with bounded weights, the Lindeberg condition [Equation~\ref{eq:lindeberg-condition}, Appendix~\ref{sec:proofs}] holds for $\widetilde C_G(\X)$ under \ref{hp:weights}, and we conclude the thesis by applying the central limit theorem \cite{billingsley1995probability} for independent, but not equally distributed, summands of $\widetilde C_G(\X)$.
Detailed development of the proof is given in Appendix~\ref{sec:proofs}.
\end{proof}

Despite the apparently little amount of information required to perform the test (only the sign of the scalar products $\x_v^\top\x_u$ of all edges $(u,v)\in E$), \citet{geary1970relative} have shown that for scalar time-series (equivalent to a line graph) the test performed on par of more sophisticated tests. We give empirical evidence of the effectiveness of our whiteness test in Section~\ref{sec:experiments}.

\subsection{Sparse versus fully connected graphs}
\label{sec:sparse-vs-full}

\begin{figure}
\centering
\includegraphics[width=.23\textwidth]{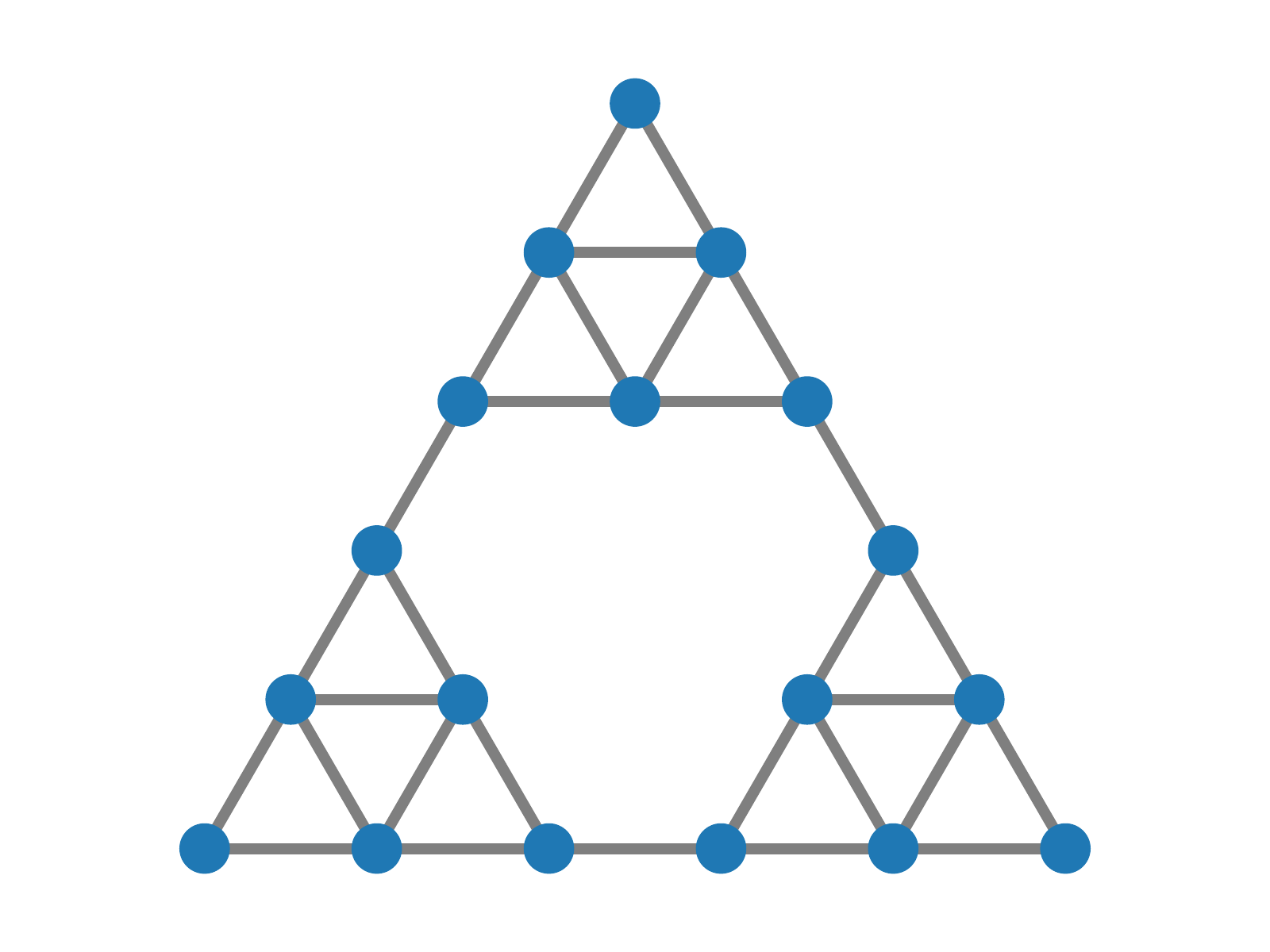}
\includegraphics[width=.56\textwidth]{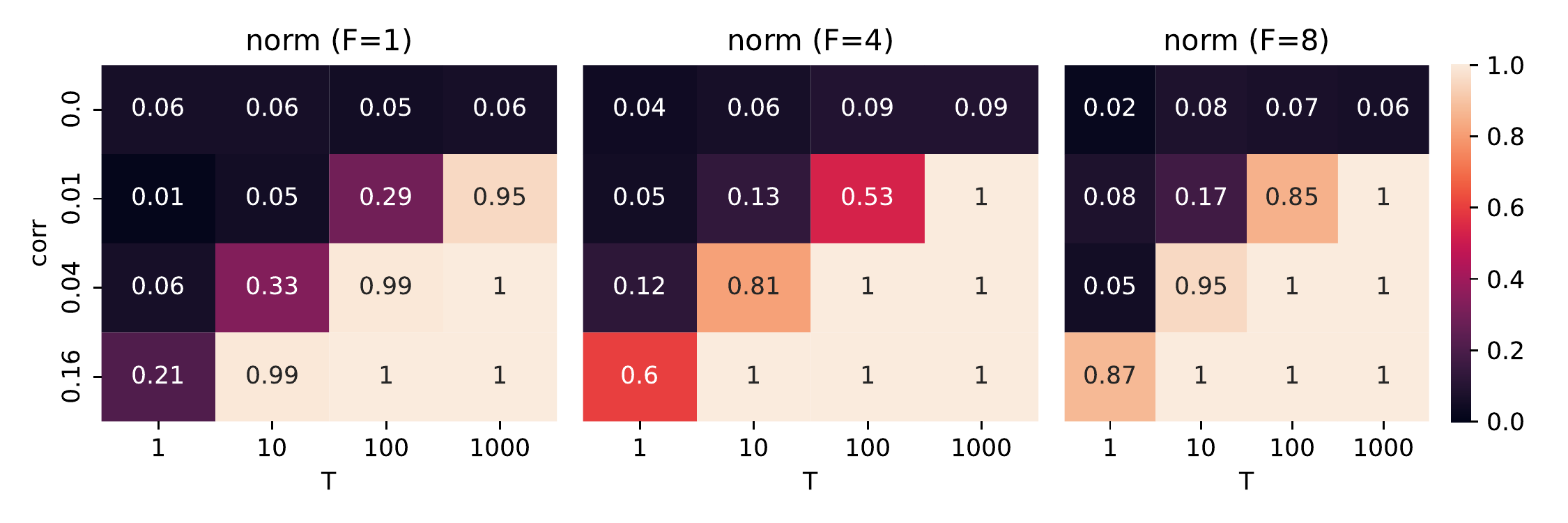}
\\
\includegraphics[width=.23\textwidth]{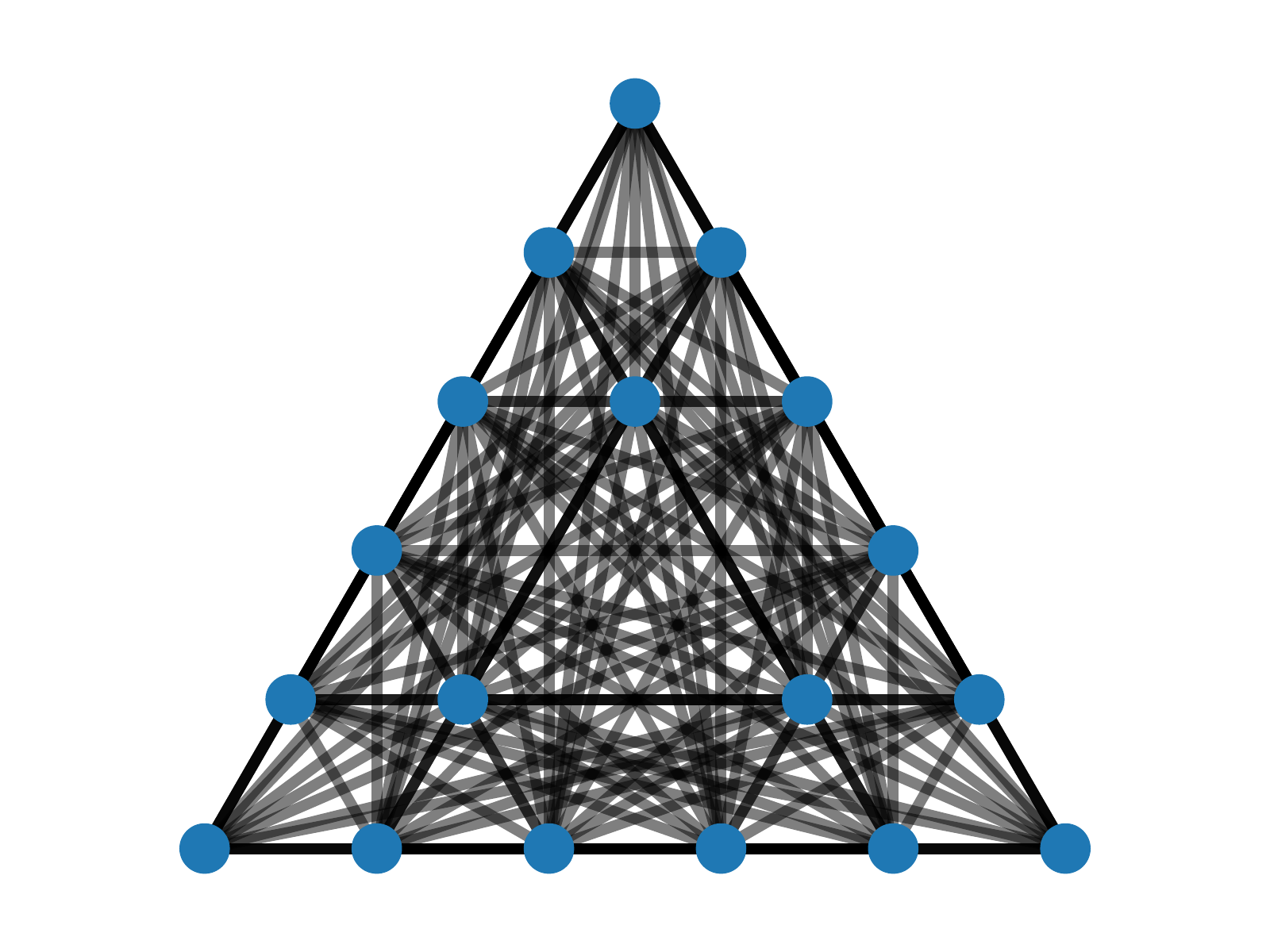}
\includegraphics[width=.56\textwidth]{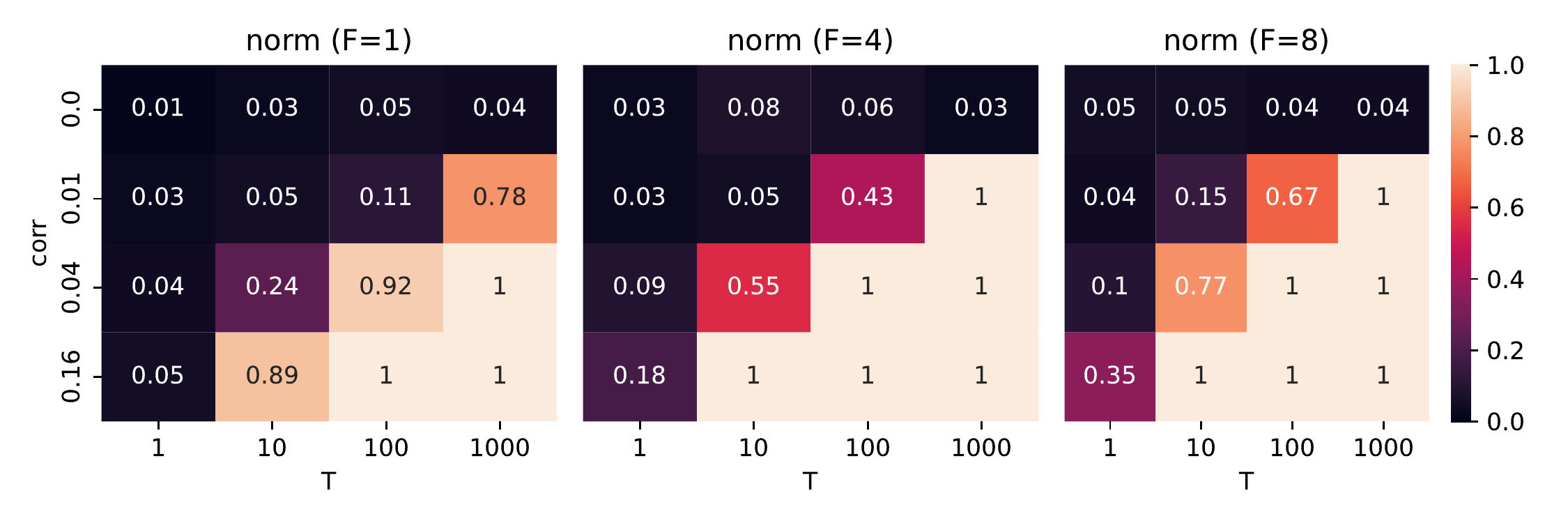}
\caption{Rate of rejected null hypotheses for different node signal dimensions $F$, number of time steps $T$, and correlation parameters $c_\spa=c$ and $c_\tim=0$; a detailed description of the data generating process is provided in Section~\ref{sec:exp:correlation}. The first row corresponds to the summation in \eqref{eq:Ctilde} only over the edges of the original graph, while the second one to a summation over all pairs of nodes (fully connected graph).}
\label{fig:sparse-vs-complete}
\end{figure}

We observe that, if we were to consider all pairs of nodes $(u,v)\in V\times V$ in \eqref{eq:Ctilde}, instead of only the edges present in graph $G$, then several additional terms that, usually, carry negligible functional dependency would be considered in the sum \eqref{eq:Ctilde}.
As a result, the impacts of more sensible correlations between nodes close in the graph can wash out.
We empirically demonstrate this effect in Figure~\ref{fig:sparse-vs-complete} where, for the same configurations of the simulated graph signal, considering only the edges in $E$ instead of all possible pairs of nodes, results in a higher statistical power, namely, a higher rate of correctly rejecting the null hypothesis.

Moreover, from formula \eqref{eq:Ctilde}, we immediately see that the computational complexity of the test scales linearly in the number of edges $|E|$. Therefore, the test is more scalable for  sparse graphs.

\subsection{Whiteness test under different problem settings}

Referring to Table~\ref{tab:configurations}, we see that test \eqref{eq:test-static} has been presented for 
\begin{enumerate*}[label=(\roman*)]
\item static graph signals (absent temporal dimension),
\item weighted and unweighted graphs,
\item scalar and multivariate node signals, and
\item $1$-hop neighbors.
\end{enumerate*}
In this section, we conclude all scenarios where the temporal dimension is absent and leave the case of temporal signals and dynamic graphs to Section~\ref{sec:spatio-temporal-test}. 

Typically, the correlation between node signals decays as the get farther apart in graph.
Therefore, the proposed test considers dependencies only among adjacent nodes, that is, $1$-hop neighboring nodes, which should be those displaying the strongest dependency. However, 
when we have reason to think that the dependency among signals covers more than $1$-hop neighbors, we can consider $K$-hop relationships as well for $K>1$. To do so, it is enough to extend the edge set $E$ with $\cup_{k=1}^K E_k$ where
$$E_k=\left\{(v_1,v_k)\st \exists (v_1,\dots,v_k) \text{ with distinct } v_1\dots v_k\in V \text{ and } (v_i,v_{i+1})\in E, \forall i<k\right\},$$
that is, all pairs of nodes connected by paths of length $k\le K$. 
The presented strategy can be seen as an extension to graphs of the portmanteau lack-of-fit test presented in \cite{box2015time}.
When appropriate, edges can be weighted differently, for instance, according to the associated path length.

From the current section and Remark~\ref{remark:hp:weights}, we see that, before applying test~\eqref{eq:test-static}, graph $G$  can be suitably modified, or augmented, to best fit the application at hand. In the next section, we demonstrate how to apply test~\eqref{eq:test-static} to temporal signals and dynamic graphs, hence covering the remaining configurations in Table~\ref{tab:configurations}.

\section{Whiteness test for spatio-temporal graph signals}
\label{sec:spatio-temporal-test}

In this section, we show how to apply the test from previous Section~\ref{sec:static-test} to a generic spatio-temporal signal associated with a possibly dynamic and weighted graph.

We consider a time frame $[1, T]$, and a discrete-time dynamic graph 
\begin{equation}
\label{eq:dynamic-graph}
G=\left(V[t],E[t],\W[t]\st t=1,\dots,T\right).
\end{equation}
At each time step $t$, $(V[t], E[t],\W[t])$ is a (static) weighted graph like the one depicted in Figure~\ref{fig:graph-signal-static}. Since we are dealing with node-level time series, we have a correspondence between the nodes of $G$ at different time steps and, in general, $V[t_1]\cap V[t_2]\ne \emptyset$ if $t_1\ne t_2$. Note that the possibility of having a variable node set is important to model scenarios like the integration of new intersections in a street map, the removal of a broken device in a cyber-physical system, or missing data from smart meters in a power grid.
The edge weights can be dynamic, as well, and we denote with $w_{u,v}[t]\in\W[t]$ the weight associated with edge $(u,v)$ present in the edge set $E[t]$ at time $t$.   

On top of graph $G$, we consider a stochastic graph signal 
\begin{equation}
\label{eq:temporal-graph-signal}
\X = \left\{\x_v[t]\in\R^F\st v\in V[t],\; t=1,2,\dots,T\right\}.
\end{equation}
In this new setting, each node $v$ of the graph is associated with a (multivariate) time series $\x_v[t]$ available for all time steps $t=1,2,\dots, T$, if $v\in V[t]$ for all $t$, or only for a subset of $\{1,2,\dots,T\}.$ 
This setting is depicted in Figure~\ref{fig:graph-signal} and expresses the full potential of the whiteness test we propose.

Here, we are interested in identifying not only spatial dependencies among different nodes, but also temporal ones appearing among observations from the same node at different time steps.
Accordingly,  we adapt the null and alternative hypotheses in \eqref{eq:test-H0-H1} to account for the temporal dimension, and define
\begin{equation}
\label{eq:test-H0-H1-temporal}
\begin{cases}
H_0:& 
\text{All pairs }\x_u[t_u],\x_v[t_v]\in\X, \text{ with } (u,t_u)\ne (v,t_v), \text{ are independent;}
\\
H_1:& 
\text{Some pairs }\x_u[t_u],\x_v[t_v]\in\X, \text{ with } (u,t_u)\ne (v,t_v), \text{ are dependent.}
\end{cases}
\end{equation}
By suitably constructing a static graph $G_T=(V_T,E_T,\W_T)$ from $G$ as a stack of snapshot graphs $(V[t],E[t],\W[t])$ and a graph signal $\X_T$ with the same observations of $\X$, but located at the nodes of graph $G_T$, we are able to test hypotheses $H_0,H_1$ with the static test \eqref{eq:test-static} presented in previous Section~\ref{sec:static-test}. 
The proposed whiteness test for spatio-temporal graph signals becomes 
\begin{equation}
\label{eq:test-temporal}
\text{If } |C_{G_T}(\X_T)| > \gamma \implies \text{ Reject null hypothesis } H_0,
\end{equation}
where $C_{G_T}(\X_T)$ is statistic \eqref{eq:test-stat} applied to $G_T$ and $\X_T$, and threshold $\gamma$ is derived from Theorem~\ref{theo:test-stat} according to a user-defined significance level $\alpha$. 

As we detail in next Section~\ref{sec:multiplex}, the construction of $G_T$ and $\X_T$ 
results in a single larger graph preserving all spatial and temporal relations in $G$ and $\X$, for which there is no constraint on the node and edge sets to be constant throughout the entire time frame $[1, T]$.
Whiteness test \eqref{eq:test-temporal} completes the coverage of the configurations reported in Table~\ref{tab:configurations} and, to emphasize its generality and versatility, we name it AZ-test.

\subsection{Multiplex graph and associated signal}
\label{sec:multiplex}

The idea is to construct graph $G_T=(V_T,E_T,\W_T)$ as a multiplex graph by stacking graphs $(V[t], E[t], \W[t])$ of dynamic graph  \eqref{eq:dynamic-graph} according to their temporal index $t$.
A representation of graph $G_T$ is given in Figure~\ref{fig:graph-signal}, where
the nodes of $G$ are replicated for all their occurrences across time preserving their spatial connectivity (solid lines), and temporal edges (dashed lines) are added by connecting corresponding nodes at subsequent time steps.
When graph $G=(V,E,\W)$ is static, we simply replicate it for all considered time steps.
Formally, $G_T$ and $\X_T$ are constructed as follows.

\begin{description}
\item[Nodes] 
We define node set $V_T$ as the disjoint union of $V[t]$, for all $t$, that is,
$$
V_T = \left\{v[t] := (v,t) \st v\in V[t], t=1,2,\dots,T\right\}.
$$ 
If node $v$ is not present in node set $V[t']$, for some $t'$, then $(v,t')\not \in V_T$. Therefore, the number of nodes $|V_T|$ is equal to $\sum_t |V[t]|$ and, for a static graph $G$, we would have $|V_T|=|V|\cdot T$.

\item[Edges] 
The edge set $E_T$ is composed of two types of edges: spatial and temporal edges. The set of spatial edges is the disjoint union of sets $E[t]$, for all $t$, that is
$$
E_\spa = \left\{(u[t], v[t]) \st (u,v) \in E[t],\; t=1,2,\dots, T\right\}.
$$
Temporal edges, instead, connect consecutive occurrences of the same nodes:
$$
E_\tim = \left\{(v[t], v[t+1]) \st v\in V[t]\cap V[t+1],\; t=1,2,\dots, T-1\right\}.
$$
The edge set $E_T$ is then given by
$$
E_T = E_\tim \cup E_\spa
$$
Referring to Figure~\ref{fig:graph-signal}, set $E_\spa$ collects all edges represented by solid lines, while edges in $E_\tim$ are represented as dashed lines. 

\item[Weights] 
The edge weights in $\W_T$ are constructed similarly to $E_T$: edges $(u[t], v[t]) \in E_\spa$ are associated with weight $w_{u,v}[t]\in \W[t]$ corresponding to edge $(u,v)\in E[t]$, while the weight of the temporal edges can be set arbitrarily. In Section~\ref{sec:weights-temporal}, we propose a reasonable way to define a single weight $w_\tim$ for all temporal edges in $E_\tim$.

\item[Signals]
Defined $G_T$, the graph signal $\X$ in \eqref{eq:temporal-graph-signal} can be arranged over the new node set $V_T$: 
$$ 
\X_T=\left\{\x_{v[t]}:=\x_v[t] \in \R^F \st v[t]\in V_T,\text{ with } \x_v[t]\in\X\right\}.
$$
\end{description}

To conclude, we observe that when $T=1$ the described procedure degenerates to that presented in Section~\ref{sec:static-test}. Secondly, we note that the construction of $G_T$ and $\X_T$ is convenient for the theoretical development because it allows applying all the results from Section~\ref{sec:static-test}, however, a practical implementation does not need the full $G_T$ graph to be explicitly constructed, as will be more evident from the next section.

\subsection{Weights for temporal edges}
\label{sec:weights-temporal}

Commenting further about weighting the temporal edges, we note that by definition of $G_T$ 
there is typically a larger number of spatial edges, than temporal edges; in the case of a static unweighted graph $G$, $|E_\tim|=(T-1) \cdot |V|$, while $$|E_\spa|=T \cdot |V| \cdot d \approx d \cdot |E_\tim|,$$ where $d$ is the average degree of the nodes in $G$.
Consequently, an imbalance of positive and negative signs encountered along the temporal edges has, overall, a lower impact on the final test statistic $C_{G_T}(\X_T)$ than that of the spatial edges.
Accordingly, we may find it appropriate to weight temporal edges so that spatial and temporal relations are of comparable importance in $C_{G_T}(\X_T)$.
In the remainder of this section, we derive a weight $w_\tim$ that meets the above criterion, when associated with all temporal edges.

Statistics $C_{G_T}(\X_T)$ can be decomposed into a sum of contributions of the spatial and temporal edges, and can be rewritten as
$$
C_{G_T}(\X_T) = 
\frac{
    \widetilde C_\spa + \widetilde C_\tim
}{
    \left(W_{2,\spa} + W_{2,\tim}\right)^\frac{1}{2}
}
$$
where
\begin{align*}
\widetilde C_\spa &=\sum_{(u[t],v[t])\in E_\spa} w_{u,v}[t]\cdot \sign\left(\x_u[t]^\top\x_v[t]\right),
\\
\widetilde C_\tim &=\sum_{(v[t],v[t+1])\in E_\tim} w_\tim\cdot \sign\left(\x_v[t]^\top\x_v[t+1]\right),
\\
W_{2,\spa} &= \sum_{(u[t],v[t])\in E_\spa\,:\, u<v} (w_{u,v}[t] + w_{v,u}[t])^2,
\\
W_{2,\tim} &= |E_\tim|\cdot w_\tim^2;
\end{align*}
as in the definition of $W_2$ in \eqref{eq:W2}, if $(u,v)\not\in E[t]$, then we assume $w_{u,v}[t]=0$.

Under the assumptions of Theorem~\ref{theo:test-stat}, we have that $\EE[\widetilde C_\spa] = 0 = \EE[\widetilde C_\tim],$ while $\var[\widetilde C_\spa] = W_{2,\spa}$ and $\var[\widetilde C_\tim] = W_{2,\tim}.$
We conclude that balancing the spatial and temporal contributions amounts to select $w_\tim$ such that the variance is the same, \ie, $W_{2,\spa}=W_{2,\tim}$. We result in 
$$
w_\tim =  \left(\frac{W_{2,\spa}}{|E_\tim|}\right)^\frac{1}{2}.
$$

When appropriate, a convex combination of spatial and temporal components can be defined in terms of a scalar value $\lambda\in[0,1]$ and obtain statistics
\begin{equation}
\label{eq:test-stat-lambda}
C_{G_T}(\X_T;\lambda) := 
\frac{
    \lambda \;\widetilde C_\spa + (1-\lambda )\;\widetilde C_\tim
}{
    \left(\lambda^2\,W_{2,\spa} + (1-\lambda)^2\,W_{2,\tim}\right)^\frac{1}{2}
}.
\end{equation}
Note that the new statistic defined in \eqref{eq:test-stat-lambda} enjoys all the properties presented in the previous section; in fact, it is equivalent to scaling all the weights of $G_T$ of either $\lambda$ or $(1-\lambda)$, and $C_{G_T}(\X_T;1/2)=C_{G_T}(\X_T)$. In particular, as a corollary of Theorem~\ref{theo:test-stat}, $C_{G_T}(\X_T;\lambda)$ is approximately distributed as a standard Gaussian under the null hypothesis, and it can be directly employed in AZ-test~\ref{eq:test-temporal}.

\section{Experiments}
\label{sec:experiments}

We consider two experimental setups. 
In the first one, Section~\ref{sec:exp:correlation}, we generate correlated graph signals and we study the ability of the AZ-test to detect data dependency of various amplitude.
In the second one, Section~\ref{sec:exp:optimality}, we test the optimality of machine-learning models trained to solve forecasting problems on synthetic and real-world data.

\subsection{Detection of correlated residuals}
\label{sec:exp:correlation}

We consider the undirected unweighted graph $G=(V,E)$ depicted in Figure~\ref{fig:corr-signals} and denote with $\vec A$ its adjacency matrix, that is $\vec A_{u,v}=1$ if $(u,v)\in E$ and $\vec A_{u,v}=0$ otherwise. In this set of experiments, we generate white-noise graph signals and correlated graph signals with the following procedure.

\subsubsection{Signal generation and testing procedure}
\label{sec:exp:correlation:data-gen}

We consider a scalar probability distribution $P$ with null median, and we sample $|V|\cdot T$ independent values $\eta_v[t]$ from $P$, one for all $t=1\dots T$ and $v\in V$. 
We obtain that $\X=\{\x_v[t]=\eta_v[t]\st \forall t,v\}$ is a graph signal of independent observation, meeting our null hypothesis $H_0$ in \eqref{eq:test-H0-H1-temporal}. 

We generate correlated signals by propagating an independent signal $\{\eta_v[t]\st \forall t,v\}$ across graph $G$ and the time dimension. Specifically, we consider the following model
\begin{equation}
\label{eq:corr-signal-generation}
\x_v[t] = \eta_v[t] + c_\tim \;\eta_v[t-1] + c_\spa \left(\sum_{(u,v)\in E} w_{u,v}[t] \,\eta_u[t]\right) - m,
\end{equation}
for all $t=2,3,\dots,T$ and $v\in V$.
The correlation between observations is controlled by positive parameters $c_\spa$ and $c_\tim$.
Finally, a scalar offset $m$ is subtracted to have a process $\X$ of null median, thus removing side effects that could make the AZ-test fail.

We consider different distributions $P$, including unimodal symmetric distributions, as well as asymmetric and bimodal ones. In particular, we select the standard Gaussian distribution $\mc N(0,1)$, the chi-squared distributions%
    \footnote{Distributions $\chi_2(d)$ are shifted to have null median.}
$\chi_2(d)$ with $d=1,5$ degrees of freedom, a mixture of $\mc N(-3,1)$ and $\mc N(+3,1)$, a mixture of $\chi_2(1)$ and $-\chi_2(5)$, and a mixture of uniform distributions $U[-4,0)$ and $U[0,1)$. 
We generate graph signals for variable number $T$ of time steps, dimension $F$ of the node features, and parameters $c_\spa$ and $c_\tim$. We also include the case with $c_\spa=c_\tim=0$ to test the rate of false positives, and to verify that we are able to replicate the user-defined significance level $\alpha$. 

Figure~\ref{fig:corr-signals} displays sample signals generated from different values of $c$. 
We observe that the higher the value of $c$, the more correlated the signals appear on both the temporal and spatial axes.
Temporal correlation is visible as horizontal strikes of similar color in the heatmaps. 
Instead, having numbered the nodes so that neighboring nodes have similar indices,%
    \footnote{We numbered the nodes according to the magnitude of components of the eigenvector associated with the smallest non-null eigenvalue of the Laplacian matrix $\vec L$ of $G$. 
    The Laplacian of an undirected graph with adjacency matrix $\vec A$ is defined by $\vec L=\vec D-\vec A$, where $\vec D_{ij}=0$ if $i\ne j$, and $D_{ii}=\sum_{j}\vec A_{ij}$.}
the spatial correlation is suggested by vertical strikes.

\begin{figure}
\centering
\rotatebox{90}{\parbox{3cm}{\centering$c=0.0$}}
\includegraphics[height=3.2cm]{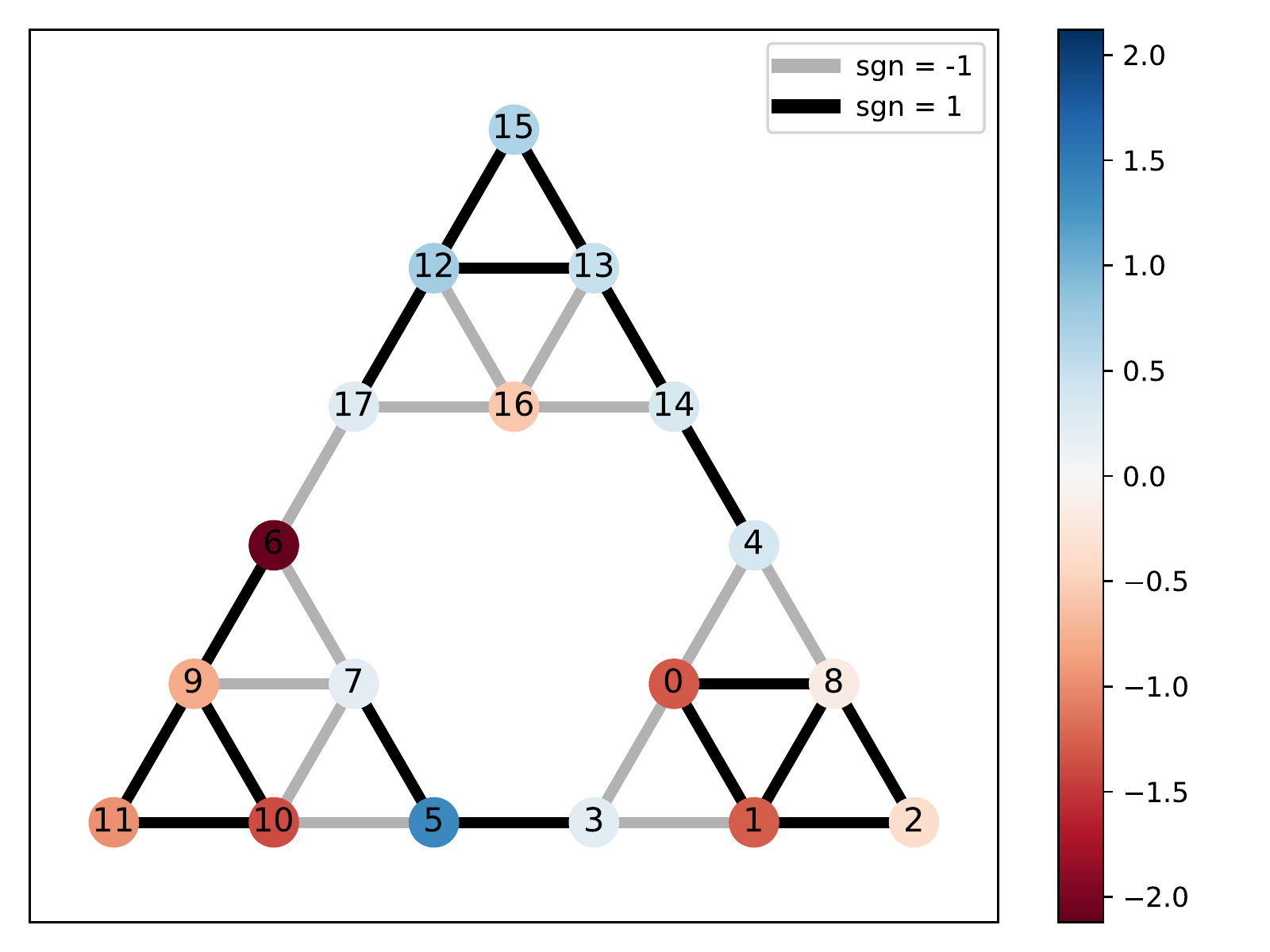}
\includegraphics[height=3.2cm]{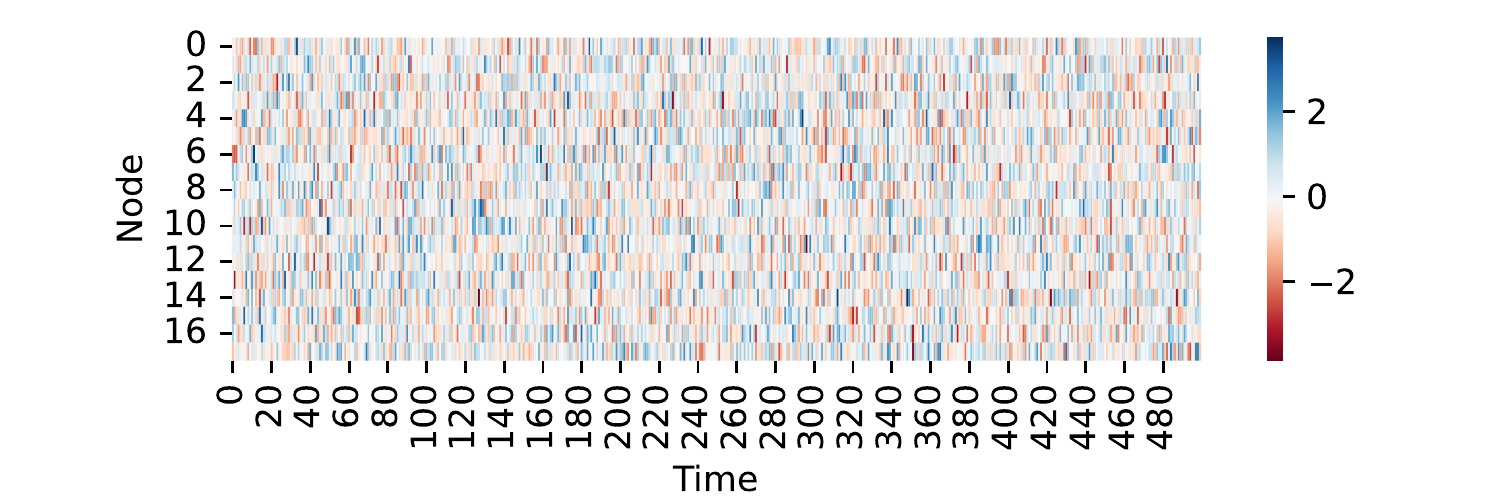}\\
\rotatebox{90}{\parbox{3cm}{\centering$c=0.04$}}
\includegraphics[height=3.2cm]{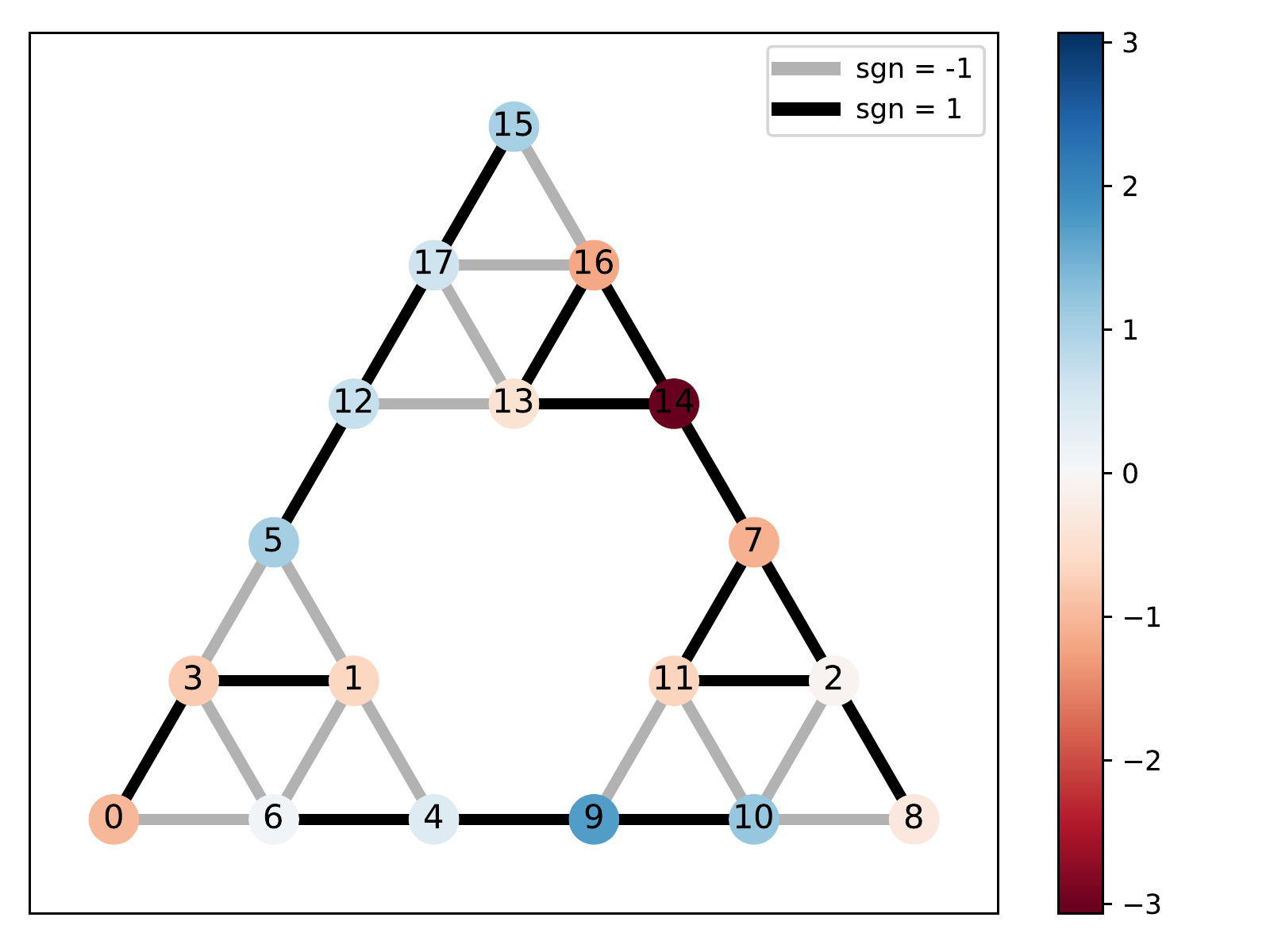}
\includegraphics[height=3.2cm]{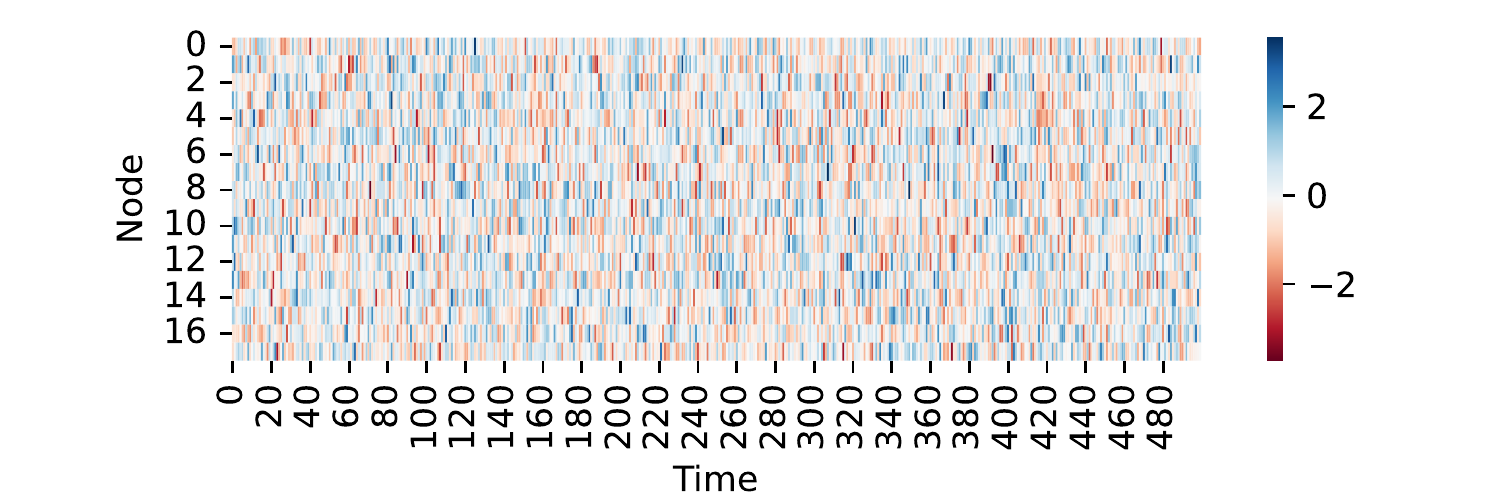}\\
\rotatebox{90}{\parbox{3cm}{\centering$c=0.16$}}
\includegraphics[height=3.2cm]{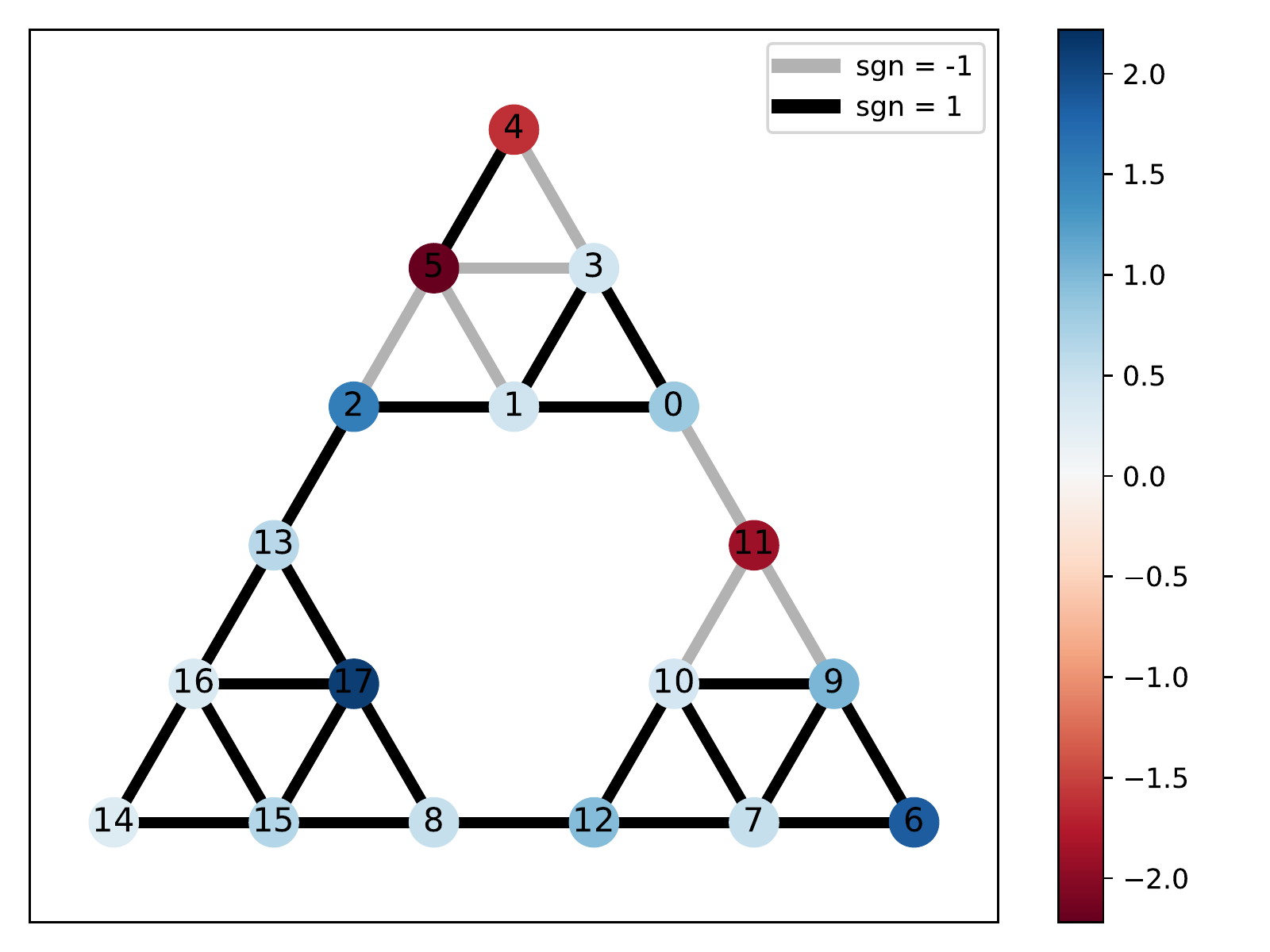}
\includegraphics[height=3.2cm]{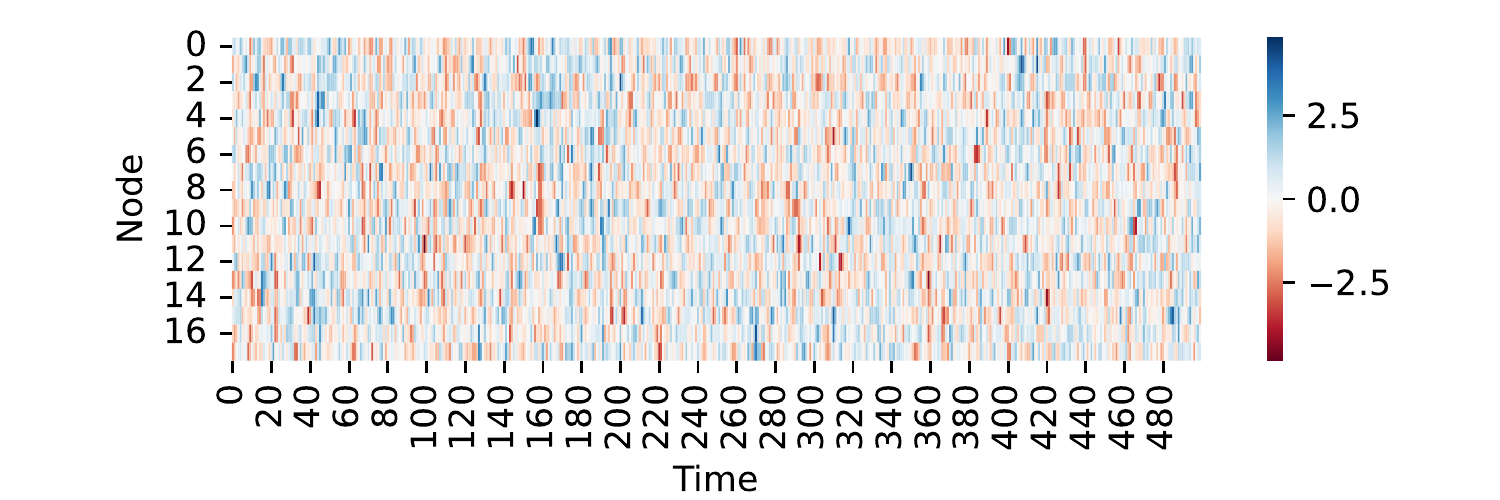}\\
\rotatebox{90}{\parbox{3cm}{\centering$c=0.64$}}
\includegraphics[height=3.2cm]{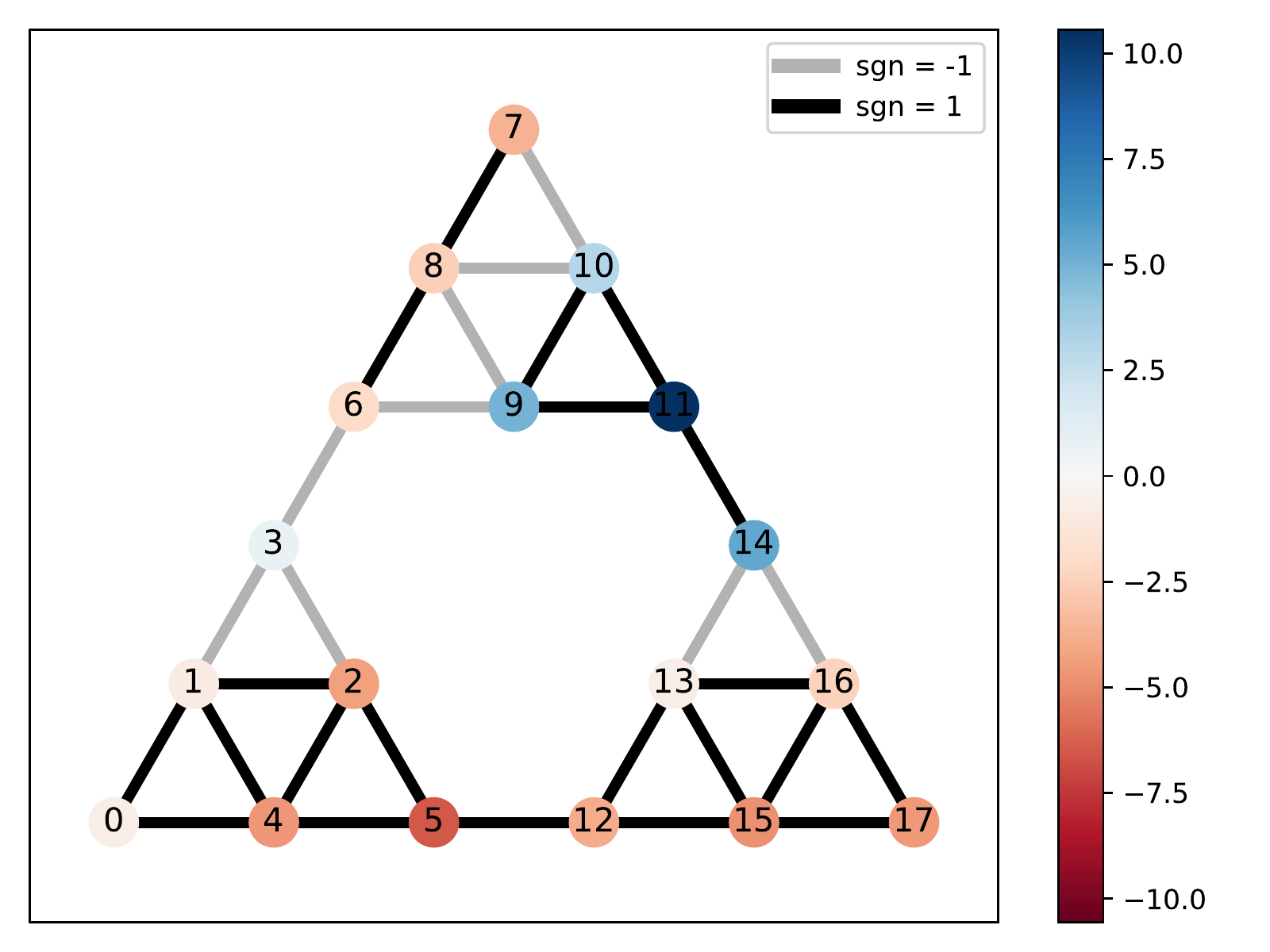}
\includegraphics[height=3.2cm]{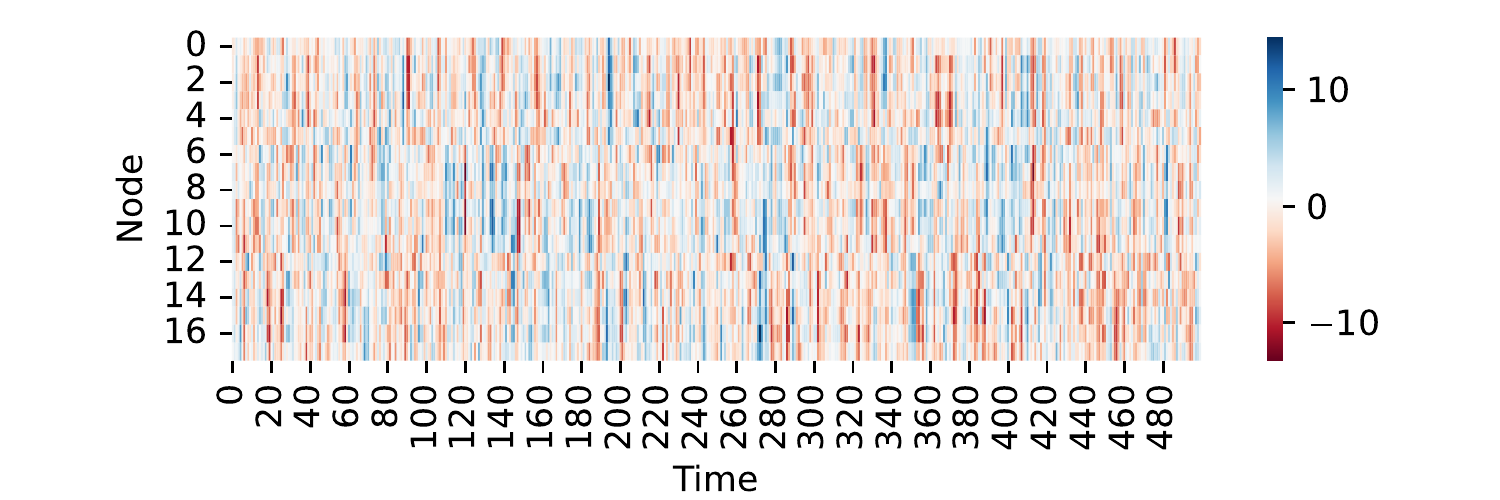}\\

\caption{Graph signals displaying different levels of correlation. Each row corresponds to a different value of $c_\tim=c_\spa=c$ in \eqref{eq:corr-signal-generation}. The number of time steps is $T=500$, the node feature dimension is $F=1$, and the distribution of all $\eta_v[t]$ is a standard Gaussian. On the right-hand side, we draw a heatmap of graph signals $\X$. On the left-hand side, we draw the underlying graph with nodes numbered according to the ordering used in the heatmaps. The node color encodes the value of the node signals at time $T/2$, and the edge color encodes the sign of the product of the corresponding node signals, \ie, $\sign(\x_u[T/2]\x_v[T/2])$ for all edges $(u,v)$.}
\label{fig:corr-signals}
\end{figure}

\begin{figure}
\centering
\includegraphics[width=\textwidth]{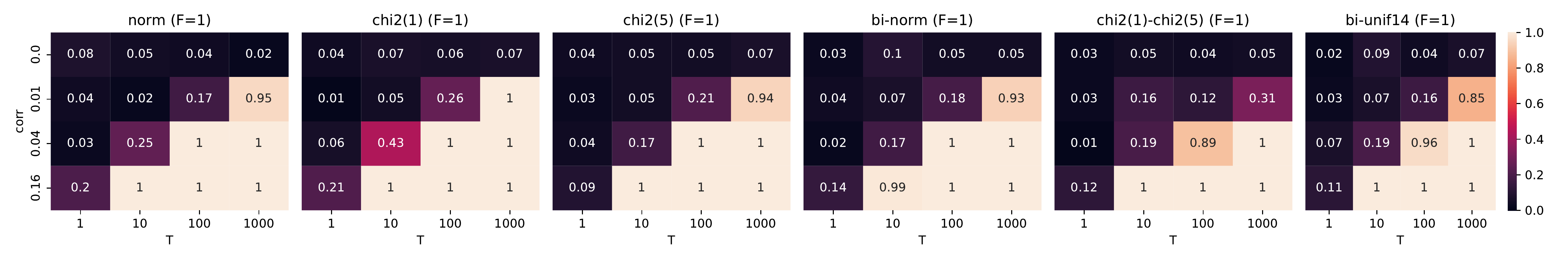}
\caption{Rate of rejected null hypotheses for significance level $\alpha=0.05$. Each block corresponds to a different distribution $P$. Every block has different correlation parameters $c$ on the rows and number of time steps $T$ on the columns. The node feature dimension is $F=1$. The underlying graph is that of Figure~\ref{fig:corr-signals}.}
\label{fig:power}
\end{figure}

\subsubsection{Results}

All experiments in the current section are repeated 100 times, each time generating a different sequence according to \eqref{eq:corr-signal-generation} and applying test \eqref{eq:test-temporal} with threshold $\gamma$ set to meet a user-defined significance level $\alpha=0.05$.  
When not specified, we consider test \eqref{eq:test-temporal}, \ie, statistic $C_{G_T}(\X_T;0.5)$  in \eqref{eq:test-stat-lambda}.
The considered figure of merit is the rate of rejected null hypotheses, that is, the number of times the test statistic is greater than $\gamma$ over the total number of repetitions of the same experiment.
This setting also applies to Figure~\ref{fig:sparse-vs-complete} discussed in Section~\ref{sec:sparse-vs-full}.

\paragraph{Power of the test.}
Figure~\ref{fig:power} shows the ability of the test to identify deviations from the null hypothesis for different values of parameter $c_\tim=c_\spa=c$, distribution $P$ and number of time steps $T$.
We observe that (i) the higher the value of $c$, the easier it is for the AZ-whiteness test to identify the correlation, and (ii) the test becomes more powerful as the number $T$ of time steps increases.
Similar results are displayed in Figure~\ref{fig:sparse-vs-complete}, where we varied the node feature dimension $F$.

\paragraph{Calibration of the test.}
From Figure~\ref{fig:power}, we can also analyze the correct calibration of the test in terms of desired significance level $\alpha$.  We can see that, regardless of the data distribution $P$ (symmetric/asymmetric, unimodal/bimodal), when the signal observations are independent --- \ie, $c=0$ --- the rate of rejected null hypotheses is around the predefined significance level $\alpha=0.05$, hence suggesting that the calibration is appropriate. For node feature dimensions $F>1$, we refer to Figure~\ref{fig:sparse-vs-complete} where, again, the observed rejection rates are close to the desired value $\alpha$.

\subsection{Optimality of forecasting models}
\label{sec:exp:optimality}

As a second set of experiments, we train state-of-the-art forecasting models on different synthetic and real-world graph signals, and we assess if they produce correlated residuals, hence implying that there is further information in the data that the models were not able to learn.

\begin{figure}
\includegraphics[width=0.4\textwidth]{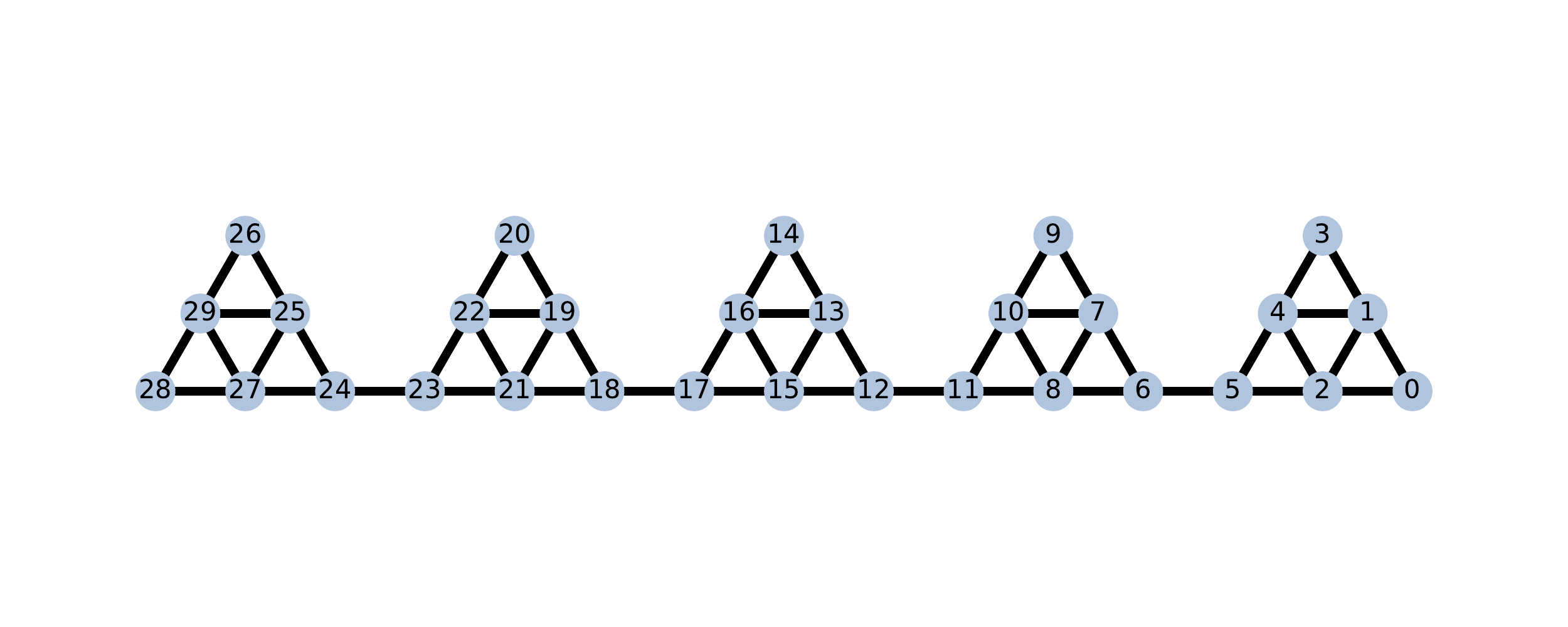}
\includegraphics[width=0.6\textwidth]{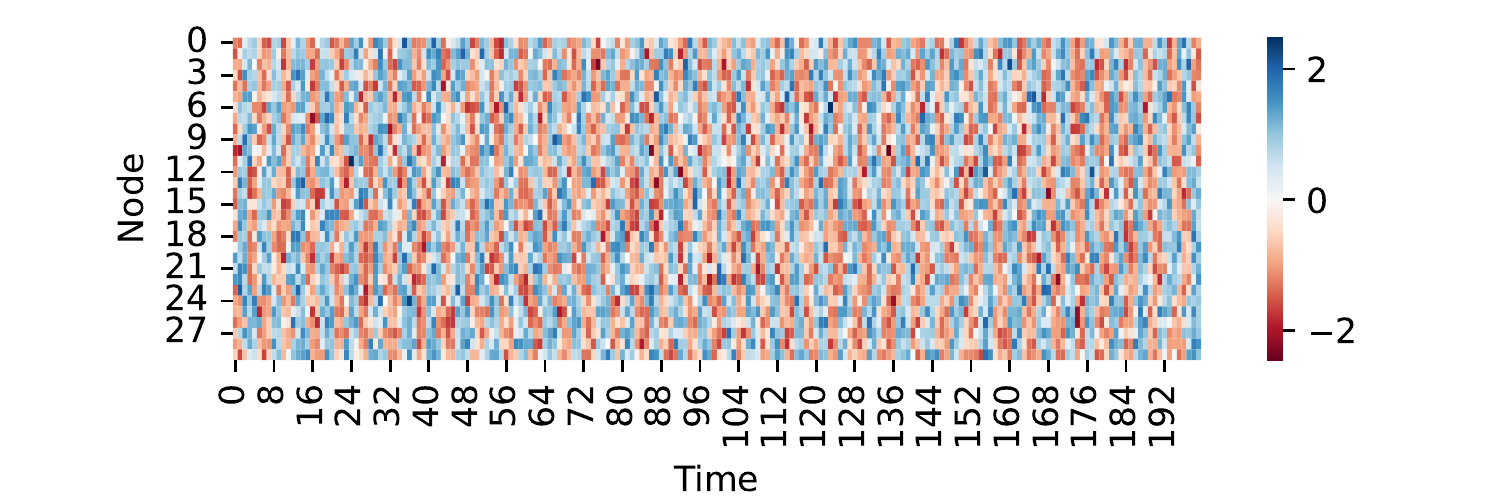}
\caption{Graph and graph signal from the GPVAR dataset of Section~\ref{sec:exp:optimality}. The displayed graph signal consists of the last $T=200$ consecutive time steps of the entire dataset.}
\label{fig:5-com-line-graph}
\end{figure}

\subsubsection{Experimental setting}

We consider three datasets: GPVAR, PemsBay, and MetrLA.
\begin{description}

    \item[GPVAR] 
    GPVAR is a synthetic dataset we generated from a graph polynomial vector autoregressive system model \cite{isufi2019forecasting}. The model generates each observation $\x_v[t]\in\R$ at time $t$ and node $v\in V$ from $Q\in \NN$ observations in the past and $L$-hop neighboring nodes, with $L\in\NN$, as follows:   
    \begin{equation}
    \label{eq:data-gen-gpvar}
    \x[t] = \tanh\left(\sum_{l=0}^L \sum_{q=1}^Q \Theta_{l,q} \vec S^l \x[t-p]\right) + \eta[t]
    \end{equation}
    where $\x[t]\in\R^N$ concatenates all scalar node signals at time step $t$, $\vec S$ is a graph shift operator, $\Theta\in \R^{(L+1)\times Q}$ collects the model parameters, and $\eta[t]\in\R^N$ is white noise generated form the standard Gaussian distribution [Section~\ref{sec:exp:correlation:data-gen}].
    In this paper, we consider the undirected and unweighted graph shown in Figure~\ref{fig:5-com-line-graph} with the following shift operator
    $
    \vec S = \vec D^{-1/2} (\vec I + \vec A) \vec D^{-1/2}
    $
    where  $\vec A$ is the adjacency matrix (with now self-loops), $\vec D$ is the diagonal degree matrix, and $\vec I $ the identity matrix.
    Model parameters $\Theta$ are set to $[[5, 2], [-4, 6], [-1, 0]]^\top$ and result in $L=Q=2$. 
    We generate $T=30000$ time steps.
    Figure~\ref{fig:5-com-line-graph} displays a graph signal generated according to \eqref{eq:data-gen-gpvar}.

    \item[PemsBay]
    PemsBay is a traffic dataset collected by the California Transportation Agencies Performance Measurement System (PeMS) \cite{li2018diffusion}. It presents $T=52128$ scalar observations from $N=325$ sensors in the Bay Area. 

    \item[MetrLA]    
    MetrLA is a dataset containing traffic information from $N=207$ detectors along the Los Angeles County highway \cite{li2018diffusion}. Data observations are collect for 4 months and amount to $T=34272$ time steps.

\end{description}

The graphs for PemsBay and MetrLA are constructed from the distances $\Delta_{u,v}$ between nodes
$u,v\in V$; not all pairwise distances are available.  
When $\Delta_{u,v}\in(0,\kappa)$, then the weighted adjacency matrix $\vec A_{u,v}=1$ and $w_{u,v}=e^{-\Delta_{u,v}^2/\sigma}$, otherwise is null; $\sigma$ is chosen to be the standard deviation of all distances in $(0, \kappa)$ \cite{li2018diffusion}.

The task to solve is a 1-step-ahead forecasting problem where, for every $t$, we want to predict graph signal $\x[t]$ from window $\vec X_{t-\tau:t-1}=\{\x[t-j]\in\R^N\st j=1,\dots,\tau\}$; $\tau$ is the size of the considered temporal window. 
In addition, for PemsBay and MetrLA, a positional encoding of the day of observation is added as exogenous variable. 
On the above forecasting tasks, we trained a Graph WaveNet \cite{wu2019graph} (GWNet), a Gated Graph Network \cite{satorras2022multivariate} (GatedGN), and a Diffusion Convolutional RNN \cite{li2018diffusion} (DCRNN). 
We also consider a baseline model (FCRNN) composed of a 1-layer fully-connected encoder $\vec z[t] = f(\x[t];\vec u[t])$ processing graph signal $\x[t]$ at each time step $t$ and the associated exogenous variables $\vec u[t]$, and a 2-layer GRU decoder applied to the resulting window of representations $[\vec z[t-\tau], \dots,\vec z[t-1]]$. 

Because our test assumes that the median of the residuals is null, we report the results when the predictions of the trained models are subtracted by their empirical median, and denote the corrected models with suffix ``-m'', \ie, FCRNN-m, GWaveNet-m, etc.

The size $\tau$ of the temporal window is set to 12 for all experiments. Train, validation and test sets contain 0.7, 0.1, and 0.2 of the original data. The models are trained until convergence with patience of $50$ epochs. 
All models and datasets are available in the TorchSpatiotemporal library \cite{TorchSpatiotemporal}, except for GPVAR dataset.  

In the following section, we report the mean absolute deviation (MAE) achieved by the above methods on the different datasets and the results of a statistical test on the median of the residuals. Metric MAE is, in fact, related to the median of the residuals in that the minimum of the MAE has to produce residuals with null median; see also Remark~\ref{remark:hp:median}. 
The test on the median is implemented as a test on the parameter of Bernoulli random variable $\sign(\x_v[t])$. Then, we report the results produced by the proposed whiteness test. In addition to the AZ-test run with parameter $\lambda=1/2$ [Equation~\ref{eq:test-stat-lambda}] accounting for both spatial and temporal correlations, we also considered $\lambda=0$ and $1$ to assess the impact of the temporal and spatial components alone, respectively.

\subsubsection{Results}

\begin{table}
\caption{Analysis of the observed residuals on GPVAR data. MAE is in the unit of measure of the graph signal, whereas $p$-values are reported as results of the statistical tests. AZ-tests with $p$-values larger than $0.01$ are represented in bold.}
\label{tab:gpvar}
\centering
\begin{tabular}{rccccc}
\toprule
\textbf{}  & MAE   & Median=0 & AZ-test($\lambda=0$) & AZ-test($\lambda=1/2$) & AZ-test($\lambda=1$) \\
\midrule
Optimal Predictor & 0.319 & 0.082  & \bfseries 0.354           & \bfseries 0.416                  & \bfseries 0.823 \\
\midrule
FCRNN      & 0.384 & 0.009    & $<$0.001             & $<$0.001               & \bfseries 0.066 \\  
FCRNM-m    & 0.384 & 1.000    & $<$0.001             & $<$0.001               & \bfseries 0.056 \\
GWNet      & 0.323 & $<$0.001 & \bfseries 0.705      &  \bfseries 0.709       &  \bfseries 0.881 \\
GWNet-m    & 0.323 & 1.000    & \bfseries 0.730      &  \bfseries 0.629       &  \bfseries 0.736 \\
GatedGN    & 0.321 & $<$0.001 &           0.005      &  \bfseries   0.171     &  \bfseries 0.414 \\
GatedGN-m  & 0.321 & 1.000    & \bfseries 0.015      &  \bfseries   0.257     &  \bfseries 0.411 \\
DCRNN      & 0.327 & $<$0.001 & \bfseries 0.533      &  \bfseries  0.955      &  \bfseries 0.587\\
DCRNN-m    & 0.327 & 1.000    & \bfseries 0.428      &  \bfseries  0.776      &  \bfseries 0.69\\
\bottomrule
\end{tabular}
\end{table}

We start by considering the synthetic dataset GPVAR. The results are reported in Table~\ref{tab:gpvar}. 
In this experiment, we know the data generating process \eqref{eq:data-gen-gpvar} and we are able to contrast the considered methods against the optimal predictor, \ie, the graph polynomial VAR filter with the same parameter $\Theta$ that generated the data.
From the first row of Table~\ref{tab:gpvar}, we observe that the whiteness test (AZ-test) for all values of $\lambda$ produced $p$-values significantly different from zero, suggesting uncorrelated residuals.
Baseline models FCRNN and FCRNN-m produce MAE substantially higher than the optimal MAE, regardless of the subtraction of the median. Consequently, we observe that AZ-test($\lambda=1/2$) suggests rejecting the null hypothesis. From the same test with $\lambda=0$ (only temporal edges are considered)  and $\lambda=1$  (only spatial edges), we see that the temporal correlation appears more prominent than the spatial one. 
In contrast, we can consider the training of GWNet, GatedGN, and DCRNN successful because they produce MAE close to the target value and relatively high $p$-values in the AZ-tests.

\begin{table}
\caption{Analysis of the observed residuals on PemsBay data. MAE is in the unit of measure of the graph signal, whereas $p$-values are reported as results of the statistical tests. AZ-tests with $p$-values larger than $0.01$ are represented in bold.}
\label{tab:pemsbay}
\centering
\begin{tabular}{rccccc}
\toprule
\textbf{}  & MAE       & Median=0 & AZ-test($\lambda=0$) & AZ-test($\lambda=1/2$) & AZ-test($\lambda=1$) \\
\midrule
FCRNN      & 2.015     & $<$0.001 & $<$0.001     & $<$0.001     & $<$0.001\\
FCRNN-m    & 2.015     & 0.993    & $<$0.001     & $<$0.001     & $<$0.001\\
GWNet      & 0.840     & $<$0.001 & $<$0.001     & $<$0.001     & $<$0.001\\
GWNet-m    & 0.840     & 0.990    & $<$0.001     & $<$0.001     & $<$0.001\\
GatedGN    & 0.838     & $<$0.001 & $<$0.001     & $<$0.001     & $<$0.001\\
GatedGN-m  & 0.838     & 0.990    & $<$0.001     & $<$0.001     & $<$0.001\\
DCRNN      & 0.845     & $<$0.001 & $<$0.001     & $<$0.001     & $<$0.001\\
DCRNN-m    & 0.845     & 0.988    & $<$0.001     & $<$0.001     & $<$0.001\\
\bottomrule
\end{tabular}
\end{table}

\begin{table}
\caption{Analysis of the residuals observed on MetrLA data. MAE is in the unit of measure of the graph signal, whereas $p$-values are reported as results of the statistical tests. AZ-tests with $p$-values larger than $0.01$ are represented in bold.}
\label{tab:metrla}
\centering
\begin{tabular}{rccccc}
\toprule
\textbf{}  & MAE       & Median=0 & AZ-test($\lambda=0$) & AZ-test($\lambda=1/2$) & AZ-test($\lambda=1$) \\
\midrule
FCRNN      & 2.841     & $<$0.001 & $<$0.001         & $<$0.001   & $<$0.001\\
FCRNN-m    & 2.841     & 0.999    & $<$0.001         & $<$0.001   & $<$0.001\\
GWNet      & 2.114     & $<$0.001 & $<$0.001         & $<$0.001   & $<$0.001\\
GWNet-m    & 2.114     & 1.000    & $<$0.001         & $<$0.001   & $<$0.001\\
GatedGN    & 2.150     & $<$0.001 & \bfseries 0.021  & $<$0.001   & $<$0.001\\
GatedGN-m  & 2.150     & 0.996    & \bfseries 0.016  & $<$0.001   & $<$0.001\\
DCRNN      & 2.141     & $<$0.001 & $<$0.001         & $<$0.001   & $<$0.001\\
DCRNN-m    & 2.140     & 1.000    & $<$0.001         & $<$0.001   & $<$0.001\\
\bottomrule
\end{tabular}
\end{table}

Regarding PemsBay and MetrLA datasets, we do not have a reference MAE, as the optimal predictor is unknown. 
From Tables~\ref{tab:pemsbay} and \ref{tab:metrla}, we observe that none of DCRNN, GatedGN, and GWNet (and their variants) produces uncorrelated residuals. 
However, in Table~\ref{tab:metrla} we see that the GatedGN produces residuals with a temporal correlation higher than the spatial one because when $\lambda=0$ the $p$-values are significantly larger than when $\lambda$ is $1$.

Finally, we point out that the null hypothesis is not rejected because Assumption \ref{hp:median} is not valid, in fact, in all cases, almost identical results are observed with the original methods (FCRNN, GWNet, GatedGN, DCRNN) and those with the median offset removed (FCRNN-m, GWNet-m, GatedGN-m, DCRNN-m).

\section{Conclusions}
\label{sec:conclusions}

In this work, we propose the first whiteness test for spatio-temporal time series defined over the nodes of a graph. 
By exploiting the known structural and functional relations defined by the underlying graph, we are able to identify both temporal and spatial correlations that exist among data observations.
We show that the test is asymptotically distribution-free with respect to the number of time steps and graph edges and, therefore, we can apply to data coming from arbitrary distributions and, even, nonidentically distributed observations.

The designed test is general, from which the definitive alphabetic encompassing name, and based on sign changes  between observations that are close in time or with respect to the graph connectivity.   
The test is very versatile, too, as it can exploit edge weights encoding the strength of the link,  
allows the graph topology to vary over time, and can operate when nodes are inserted or removed from the graph; these scenarios are frequent when dealing with real-world graph signals, like those coming from sensor networks, \eg, transportation networks, and smart grids.
Finally, the test is computationally scalable for sparse graphs with complexity linear in the number of edges and time steps. 

We empirically show that the proposed test is indeed capable of identifying dependencies among graph signals; when applied to analyze prediction residuals, the AZ-test can assess whether given forecasting models can be assumed to be optimal or not. 

The proposed whiteness test is pioneering in its nature, and allows for future extensions involving stochastic graphs and more sophisticated edge statistics.

\subsection*{Acknowledgements}
This work was supported by the Swiss National Science Foundation project FNS 204061: \emph{High-Order Relations and Dynamics in Graph Neural Networks.}

\bibliographystyle{abbrvnat}
\bibliography{pan}

\begin{thebibliography}{22}
\providecommand{\natexlab}[1]{#1}
\providecommand{\url}[1]{\texttt{#1}}
\expandafter\ifx\csname urlstyle\endcsname\relax
  \providecommand{\doi}[1]{doi: #1}\else
  \providecommand{\doi}{doi: \begingroup \urlstyle{rm}\Url}\fi

\bibitem[Billingsley(1995)]{billingsley1995probability}
P.~Billingsley.
\newblock Probability and measure 3rd edition new york, 1995.

\bibitem[Bose and Hachem(2020)]{bose2020whiteness}
A.~Bose and W.~Hachem.
\newblock A whiteness test based on the spectral measure of large non-hermitian
  random matrices.
\newblock In \emph{ICASSP 2020-2020 IEEE International Conference on Acoustics,
  Speech and Signal Processing (ICASSP)}, pages 8768--8771. IEEE, 2020.

\bibitem[Box et~al.(2015)Box, Jenkins, Reinsel, and Ljung]{box2015time}
G.~E. Box, G.~M. Jenkins, G.~C. Reinsel, and G.~M. Ljung.
\newblock \emph{Time series analysis: forecasting and control}.
\newblock John Wiley \& Sons, 2015.

\bibitem[Chen and Zhang(2013)]{chen2013graph}
H.~Chen and N.~R. Zhang.
\newblock Graph-based tests for two-sample comparisons of categorical data.
\newblock \emph{Statistica Sinica}, pages 1479--1503, 2013.

\bibitem[Chitturi(1974)]{chitturi1974distribution}
R.~V. Chitturi.
\newblock Distribution of residual autocorrelations in multiple autoregressive
  schemes.
\newblock \emph{Journal of the American Statistical Association}, 69\penalty0
  (348):\penalty0 928--934, 1974.

\bibitem[Cini and Marisca(2022)]{TorchSpatiotemporal}
A.~Cini and I.~Marisca.
\newblock {Torch Spatiotemporal}, 3 2022.
\newblock URL \url{https://github.com/TorchSpatiotemporal/tsl}.

\bibitem[Cini et~al.(2021)Cini, Marisca, and Alippi]{cini2021filling}
A.~Cini, I.~Marisca, and C.~Alippi.
\newblock Filling the g\_ap\_s: Multivariate time series imputation by graph
  neural networks.
\newblock In \emph{International Conference on Learning Representations}, 2021.

\bibitem[Drouiche(2000)]{drouiche2000new}
K.~Drouiche.
\newblock A new test for whiteness.
\newblock \emph{IEEE transactions on signal processing}, 48\penalty0
  (7):\penalty0 1864--1871, 2000.

\bibitem[Durbin and Watson(1950)]{durbin1950testing}
J.~Durbin and G.~S. Watson.
\newblock Testing for serial correlation in least squares regression: I.
\newblock \emph{Biometrika}, 37\penalty0 (3/4):\penalty0 409--428, 1950.

\bibitem[Friedman and Rafsky(1979)]{friedman1979multivariate}
J.~H. Friedman and L.~C. Rafsky.
\newblock Multivariate generalizations of the wald-wolfowitz and smirnov
  two-sample tests.
\newblock \emph{The Annals of Statistics}, pages 697--717, 1979.

\bibitem[Geary(1970)]{geary1970relative}
R.~C. Geary.
\newblock Relative efficiency of count of sign changes for assessing residual
  autoregression in least squares regression.
\newblock 1970.

\bibitem[Hochberg(1988)]{hochberg1988sharper}
Y.~Hochberg.
\newblock A sharper bonferroni procedure for multiple tests of significance.
\newblock \emph{Biometrika}, 75\penalty0 (4):\penalty0 800--802, 1988.

\bibitem[Hosking(1980)]{hosking1980multivariate}
J.~R. Hosking.
\newblock The multivariate portmanteau statistic.
\newblock \emph{Journal of the American Statistical Association}, 75\penalty0
  (371):\penalty0 602--608, 1980.

\bibitem[Isufi et~al.(2019)Isufi, Loukas, Perraudin, and
  Leus]{isufi2019forecasting}
E.~Isufi, A.~Loukas, N.~Perraudin, and G.~Leus.
\newblock Forecasting time series with varma recursions on graphs.
\newblock \emph{IEEE Transactions on Signal Processing}, 67\penalty0
  (18):\penalty0 4870--4885, 2019.

\bibitem[Li and McLeod(1981)]{li1981distribution}
W.~Li and A.~McLeod.
\newblock Distribution of the residual autocorrelations in multivariate arma
  time series models.
\newblock \emph{Journal of the Royal Statistical Society: Series B
  (Methodological)}, 43\penalty0 (2):\penalty0 231--239, 1981.

\bibitem[Li et~al.(2018)Li, Yu, Shahabi, and Liu]{li2018diffusion}
Y.~Li, R.~Yu, C.~Shahabi, and Y.~Liu.
\newblock Diffusion convolutional recurrent neural network: Data-driven traffic
  forecasting.
\newblock In \emph{International Conference on Learning Representations}, 2018.
\newblock URL \url{https://openreview.net/forum?id=SJiHXGWAZ}.

\bibitem[Li et~al.(2019)Li, Lam, Yao, and Yao]{li2019testing}
Z.~Li, C.~Lam, J.~Yao, and Q.~Yao.
\newblock On testing for high-dimensional white noise.
\newblock \emph{The Annals of Statistics}, 47\penalty0 (6):\penalty0
  3382--3412, 2019.

\bibitem[Ljung and Box(1978)]{ljung1978measure}
G.~M. Ljung and G.~E. Box.
\newblock On a measure of lack of fit in time series models.
\newblock \emph{Biometrika}, 65\penalty0 (2):\penalty0 297--303, 1978.

\bibitem[Rosenbaum(2005)]{rosenbaum2005exact}
P.~R. Rosenbaum.
\newblock An exact distribution-free test comparing two multivariate
  distributions based on adjacency.
\newblock \emph{Journal of the Royal Statistical Society: Series B (Statistical
  Methodology)}, 67\penalty0 (4):\penalty0 515--530, 2005.

\bibitem[Satorras et~al.(2022)Satorras, Rangapuram, and
  Januschowski]{satorras2022multivariate}
V.~G. Satorras, S.~S. Rangapuram, and T.~Januschowski.
\newblock Multivariate time series forecasting with latent graph inference.
\newblock \emph{arXiv preprint arXiv:2203.03423}, 2022.

\bibitem[Wald and Wolfowitz(1940)]{wald1940test}
A.~Wald and J.~Wolfowitz.
\newblock On a test whether two samples are from the same population.
\newblock \emph{The Annals of Mathematical Statistics}, 11\penalty0
  (2):\penalty0 147--162, 1940.

\bibitem[Wu et~al.(2019)Wu, Pan, Long, Jiang, and Zhang]{wu2019graph}
Z.~Wu, S.~Pan, G.~Long, J.~Jiang, and C.~Zhang.
\newblock Graph wavenet for deep spatial-temporal graph modeling.
\newblock In \emph{The 28th International Joint Conference on Artificial
  Intelligence (IJCAI)}. International Joint Conferences on Artificial
  Intelligence Organization, 2019.

\end{thebibliography}

\appendix 

\section{Proof of Theorem~\ref{theo:test-stat}}
\label{sec:proofs}

We start by proving two auxiliary lemmas, Lemma~\ref{lemma:sign-dist} and Lemma~\ref{lemma:sign-independency}, which will be used to prove Theorem~\ref{theo:test-stat}.

For brevity, in the following we use the short forms
\begin{align*}
s(e):=&s((u,v)):=\sign(\x_u^\top\x_v),
\\
\pi_v(\bar \x):=&\prob_{\x_v}(\bar \x^\top\x_v),
\\
\pi_e:=&\pi_{(u,v)}:=\EE_{\x_v}\left[\pi_u(\x_v)\right].
\end{align*}

\begin{lemma}
\label{lemma:sign-dist}
Under assumption \ref{hp:null} of independent $\x_v$ for $v\in V$, we have that
\begin{align*}
\prob(s(e)=a)&=\EE_{\x_v}[\pi_{u}(a\x_v)] = \EE_{\x_u}[\pi_{v}(a\x_u)] 
\\&= 
\begin{cases}
0 & a = 0,
\\
\pi_e & a > 0 ,
\\
1-\pi_e & a < 0.
\end{cases}
\end{align*}
Furthermore, if \ref{hp:median} holds, then $\EE[s(e)]=0$.
\end{lemma}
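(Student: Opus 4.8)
The plan is to reduce $\prob(s(e)=a)$ to a conditional computation that isolates the role of independence, and then to read off the three cases from how rescaling $\bar\x$ by a positive or a negative constant acts on the sign of the inner product. Fix an edge $e=(u,v)$. Under \ref{hp:null} the vectors $\x_u$ and $\x_v$ are independent, so by the tower property I would write
$$
\prob(s(e)=a)=\EE_{\x_v}\big[\prob_{\x_u}(\sign(\x_v^\top\x_u)=a\mid \x_v)\big],
$$
treating the outer copy $\x_v$ as a frozen vector $\bar\x$ inside the inner probability. This is the only place independence enters, and it is exactly what makes the law of $s(e)$ computable.

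Next I would dispatch the cases $a\ne 0$. For $a>0$, positive scaling preserves the sign, so $\{\sign(\bar\x^\top\x_u)=1\}=\{\bar\x^\top\x_u>0\}=\{(a\bar\x)^\top\x_u>0\}$, which by definition has probability $\pi_u(a\bar\x)$, and this equals $\pi_u(\bar\x)$ since the event is unchanged; averaging over $\x_v$ gives $\EE_{\x_v}[\pi_u(a\x_v)]=\EE_{\x_v}[\pi_u(\x_v)]=\pi_e$. For $a<0$, negative scaling flips the sign, so $\{\sign(\bar\x^\top\x_u)=-1\}=\{(a\bar\x)^\top\x_u>0\}$, again with probability $\pi_u(a\bar\x)$, and the identity $\prob(s(e)=a)=\EE_{\x_v}[\pi_u(a\x_v)]$ holds verbatim from conditioning. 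Evaluating this further via $\prob_{\x_u}(\bar\x^\top\x_u<0)=1-\pi_u(\bar\x)-\prob_{\x_u}(\bar\x^\top\x_u=0)$ and taking expectations yields $1-\pi_e$, provided the orthogonality atom vanishes. The two symmetric forms $\EE_{\x_v}[\pi_u(a\x_v)]=\EE_{\x_u}[\pi_v(a\x_u)]$ then follow because $\x_u^\top\x_v=\x_v^\top\x_u$, so conditioning on $\x_u$ first produces the same quantity; equivalently $\pi_e=\prob(\x_u^\top\x_v>0)$ is manifestly symmetric in $u,v$.

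The delicate step, and the one I expect to be the main obstacle, is the case $a=0$, i.e.\ showing $\prob(s(e)=0)=\prob(\x_u^\top\x_v=0)=0$; this is precisely what lets the $a<0$ computation collapse to $1-\pi_e$ and what removes the middle term from $\EE[s(e)]$. In the scalar case $F=1$ it is immediate, since $\x_u\x_v=0$ forces $\x_u=0$ or $\x_v=0$, both null events under the standing hypothesis $\x_v\ne 0$ almost surely. For $F>1$ this is not merely formal, as orthogonality can occur with both vectors nonzero, so I would invoke the non-degeneracy captured by that standing assumption (see Remark~\ref{remark:hp:null}) to discard the atom. Granting this, all three probabilities are accounted for and sum to one.

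Finally, for the ``furthermore'' clause I would combine the case values into
$$
\EE[s(e)]=(+1)\,\pi_e+(-1)\,(1-\pi_e)=2\pi_e-1,
$$
and show $\pi_e=\tfrac12$ under \ref{hp:median}. For each fixed $\bar\x\ne 0$, assumption \ref{hp:median} gives $\EE_{\x_u}[\sign(\bar\x^\top\x_u)]=0$, hence $\prob_{\x_u}(\bar\x^\top\x_u>0)=\prob_{\x_u}(\bar\x^\top\x_u<0)$; with the zero atom already discarded, both equal $\tfrac12$, so $\pi_u(\bar\x)=\tfrac12$. Averaging over $\x_v$ yields $\pi_e=\EE_{\x_v}[\pi_u(\x_v)]=\tfrac12$, and therefore $\EE[s(e)]=0$, which closes the argument.
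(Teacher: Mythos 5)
Your proof is correct and follows essentially the same route as the paper's: condition on one endpoint using \ref{hp:null}, read off the $a>0$ and $a<0$ cases from how scaling by $a$ acts on the inner product, use symmetry of $\x_u^\top\x_v$ for the two equivalent expectations, and derive $\pi_e=\tfrac12$ from the equivalence of \ref{hp:median} with $\pi_v(\bar\x)=\tfrac12$. If anything, you are slightly more explicit than the paper about the one genuinely delicate point --- that the orthogonality atom $\prob(\x_u^\top\x_v=0)$ must vanish for $F>1$, which the paper folds silently into the standing assumption $\x_v\ne 0$ a.s.\ (cf.\ Remark~\ref{remark:hp:null}).
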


\begin{proof}
\begin{enumproof}
\item For any value $a\in \{-1, 0, 1\}$, edge $e=(u,v)\in E$, and distributions $P_v$ of all node signal $\x_v$ for $v\in V$, we have:
\begin{align*}
\prob(s(e)=a) &= \prob(a\,\x_v^\top\x_u>0) 
\\ 
    &= \int \prob(a\,\bar\x^\top\x_u>0|\x_v=\bar \x)\dint P_v(\bar \x) 
\\\ref{hp:null}\ 
    &= \int_{\R^F\setminus \{0\}} \prob(a\,\bar\x^\top\x_u>0)\dint P_v(\bar \x) 
\\
    &= \int \pi_u(a\,\bar\x)\dint P_v(\bar \x) = \EE_{\bar \x\sim P_v}[\pi_{u}(a\bar \x)] = \pi_e
\end{align*}
With analogous developments, we obtain $\prob(s(e)=a) = \EE_{\bar \x\sim P_u}[\pi_{v}(a\bar \x)]$, so $\pi_{e}=\pi_{(u,v)}=\pi_{(v,u)}$.

\item
Because for all $v$
$$\pi_v(\bar \x)=\prob_{\x_v}(\x_v^\top\bar \x>0) = 1 -\prob_{\x_v}(\x_v^\top\bar \x\le 0) = 1-\pi_v(-\bar \x),$$
we conclude that
\begin{align*}
\prob(s(e)=a) &= 
    \begin{cases}
    0 & a = 0,
    \\
    \pi_e & a > 0 ,
    \\
    1-\pi_e & a < 0,
    \end{cases}
\end{align*}
hence proving the first part of the thesis.

\item We observe that for all $u\in V,\bar \x\ne \vec 0$:
\begin{align*}
\EE_{\bar\x\sim P_v}[\sign(\bar\x^\top\x_u)]
&= 1\cdot\prob(\sign(\bar\x^\top\x_u)=1) + 0 + (-1)\prob(\sign(\bar\x^\top\x_u)=-1)
\\&= \pi_u(\bar \x) -(1-\pi_u(\bar \x))
= 2 \pi_u(\bar \x) - 1.
\end{align*}
Therefore, we have the following equivalence
\begin{equation}
\label{hp:median-pi}
\text{Assumption~\ref{hp:median} } \iff \pi_v(\bar\x) = \frac{1}{2} \text{ for all }v\in V,\;\bar\x\ne \vec 0.
\end{equation}
We conclude that, under \ref{hp:median},
\begin{align*}
\EE[s(e)]&= 1\cdot\pi_e + 0 + (-1)(1-\pi_e) = 2\pi_e -1 
\\ &= 2\EE_{\x_v}[\pi_u(\x_v)] -1 = 2\frac{1}{2}-1=0.
\end{align*}
\end{enumproof}
\end{proof}

\begin{lemma}
\label{lemma:sign-independency}
Consider two distinct edges $e,f\in E$ that are not self-loop.  
\begin{itemize}
    \item If \ref{hp:null} holds and $e,f$ are not adjacent, then random variables $s(e)$ and $s(f)$ are independent.
    \item Otherwise, if $e=(u,v),f=(u,w)$ for $v\ne w$, then $s(e)$ and $s(f)$ are independent if both \ref{hp:null} and \ref{hp:median} hold.  
\end{itemize}
\end{lemma}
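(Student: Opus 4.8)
The plan is to handle the two cases of the lemma separately. The first, non-adjacent case follows directly from \ref{hp:null}, whereas the second, adjacent case is the substantive one and relies crucially on \ref{hp:median}. In both cases it suffices to work with $a,b\in\{-1,+1\}$, since $\prob(\x_v=\vec 0)=0$ makes each sign almost surely nonzero and $\prob(s(e)=0)=0$.

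For the non-adjacent case, write $e=(u,v)$ and $f=(p,q)$ with $\{u,v\}\cap\{p,q\}=\emptyset$. Then $s(e)=\sign(\x_u^\top\x_v)$ is a measurable function of $(\x_u,\x_v)$ only, and $s(f)=\sign(\x_p^\top\x_q)$ is a measurable function of $(\x_p,\x_q)$ only. Under \ref{hp:null} the node signals are mutually independent, so the vectors $(\x_u,\x_v)$ and $(\x_p,\x_q)$ are independent; hence so are the measurable functions $s(e)$ and $s(f)$, and nothing more is needed.

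For the adjacent case $e=(u,v)$, $f=(u,w)$ with $v\ne w$, both signs depend on the shared signal $\x_u$, so I would condition on $\x_u=\bar\x$ with $\bar\x\ne\vec 0$. Given $\x_u=\bar\x$, the events $\{s(e)=a\}=\{\sign(\bar\x^\top\x_v)=a\}$ and $\{s(f)=b\}=\{\sign(\bar\x^\top\x_w)=b\}$ involve only $\x_v$ and $\x_w$, which are independent of each other and of $\x_u$ by \ref{hp:null}. The conditional joint probability therefore factorizes,
\[
\prob\big(s(e)=a,\,s(f)=b \st \x_u=\bar\x\big)
= \prob\big(\sign(\bar\x^\top\x_v)=a\big)\,\prob\big(\sign(\bar\x^\top\x_w)=b\big).
\]

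The main obstacle is that each factor on the right still depends on the conditioning value $\bar\x$: by Lemma~\ref{lemma:sign-dist} the first equals $\pi_v(\bar\x)$ when $a=+1$ and $\pi_v(-\bar\x)=1-\pi_v(\bar\x)$ when $a=-1$, and analogously for $w$. This is exactly where \ref{hp:median} is indispensable: through the equivalence \eqref{hp:median-pi} it forces $\pi_v(\bar\x)=\pi_w(\bar\x)=\tfrac12$ for every $\bar\x\ne\vec 0$, so each factor equals $\tfrac12$ regardless of $\bar\x$ and the conditional joint probability is the constant $\tfrac14$. Integrating against the law of $\x_u$ then gives $\prob(s(e)=a,\,s(f)=b)=\tfrac14$, while Lemma~\ref{lemma:sign-dist} gives $\prob(s(e)=a)=\prob(s(f)=b)=\tfrac12$ under \ref{hp:median}; the product of these equals $\tfrac14$, establishing independence for all $a,b\in\{-1,+1\}$. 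It is worth emphasizing that without \ref{hp:median} the two factors are genuine functions of $\bar\x$, and the expectation over $\x_u$ of their product need not equal the product of their expectations, so the shared endpoint really does couple the two signs.
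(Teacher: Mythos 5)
Your proof is correct and follows essentially the same route as the paper's: the non-adjacent case is dispatched by noting the two signs are functions of disjoint, mutually independent node signals, and the adjacent case conditions on the shared signal $\x_u$, factorizes the conditional joint probability by \ref{hp:null}, and invokes \ref{hp:median} (via $\pi_v\equiv\tfrac12$) to make both factors constant so the integration over $\x_u$ yields $\tfrac14=\tfrac12\cdot\tfrac12$. Your closing remark on why the shared endpoint genuinely couples the signs without \ref{hp:median} matches the paper's own Jensen-type observation.
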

\begin{proof}
To prove the thesis, we need to show that, for every $a,b\in \{-1,0,1\}$, 
$$
(*):=\prob(s(e)=a)\prob(s(f)=b) = \prob(s(e)=a, s(f)=b) =: (**)
$$

\begin{enumproof}
\item
If $e$ and $f$ are not adjacent, then $s(e),s(f)$ are independent because computed from different node pairs, which are independent by \ref{hp:null}. This proves the first part.

\item
Consider now $e=(u,v)$ and $f=(u,w)$ for some distinct $u, v, w\in V$.  
We start from $(*)$ and, in light of Lemma~\ref{lemma:sign-dist}, we have that
$$(*) = \EE_{\x_v}[\pi_{u}(a\x_v)] \cdot \EE_{\x_v}[\pi_{u}(b\x_w)].$$

\item
On the other side
\begin{align*}
(**) &=\prob(s(e)=a,s(f)=b)
\\
     &= \int \prob(a\,\bar\x^\top \x_v>0, b\,\bar\x^\top \x_w>0|\x_u=\bar \x) \dint P_{u}(\bar\x)
\\\ref{hp:null}\ 
     &= \int \prob(a\,\bar\x^\top \x_v>0)\prob(b\,\bar\x^\top \x_w>0) \dint P_{u}(\bar\x)
\\   &= \EE_{\x_u}\left[\pi_v(a\,\x_u)\pi_w(b\,\x_u)\right].
\end{align*}

\item For example, in the case of $P_v$ is the same for all $v\in V$, we reduce to Jensen's inequality, yielding $(*)=(**)$ only in particular cases, like when $\pi_v(\x)$ is affine on the support of $\x$. 
However, under the additional assumption of \ref{hp:median}, we have that $\pi_v(\x)=1/2$ for all $v$ and $\x$, so
$$
(*) = \begin{cases}
0 & ab = 0,
\\ 1/4  & ab \ne 0
\end{cases}
=(**).
$$
This proves the independency of $s(e)$ and $s(f)$.
\end{enumproof}
\end{proof}

Proven Lemmas~\ref{lemma:sign-dist} and \ref{lemma:sign-independency}, we return to Theorem~\ref{theo:test-stat}.
Without loss of generality, we can assume that the given graph $G=(V,E,\W)$ is undirected. Otherwise, we can always construct an undirected graph $\widetilde G=(\widetilde V,\widetilde E,\widetilde \W)$ from directed graph $G$ that produces the same test statistics $C_G(\X)=C_{\widetilde G}(\X)$ and for which $W_2$ in \eqref{eq:W2} simplifies to $\sum_{e\in E} w_e^2$;
to do so, it is enough to consider $\widetilde V=V$, construct $\widetilde E_u$ by removing the orientation of the edges in $E$, 
and considering $\widetilde w_{u,v}\in\widetilde \W$ as $w_{u,v}+w_{v,u}$ if both $(u,v)$ and $(v,u)\in E$, otherwise $\widetilde w_{u,v}$ equals either $w_{u,v}$ or $w_{v,u}$, depending which edge is present in $E$. Moreover, we note that the concept of correlation is not directional.

Under the assumption of undirected graph with no self-loops, we see that $\widetilde C_G(\X)$ in \eqref{eq:Ctilde} is a weighted sum of \iid Bernoulli random variables:
\begin{align*}
\widetilde C_G(\X) &= \sum_{(e) \in E} w_e s(e)
\end{align*}
or, equivalently, a sum of independent [Lemma~\ref{lemma:sign-independency}], but not identically distributed terms. Nevertheless, a central limit theorem \cite[Th.~27.3]{billingsley1995probability} can still be applied to show the thesis if the following Lindeberg condition on random variables $\{w_e\,s(e)\st e\in E\}$ holds.

Consider sequence of independent processes 
$$\big\{\{X_{n,k}\st k\le n\}\st n\in\NN\big\}$$
with $\EE[X_{n,k}]=0$ and $\var[X_{n,k}]=\sigma_{n,k}^2$ for all $k\le n,n\in\NN$, and 
$$s_n=\left(\sum_k \sigma_{n,k}^2\right)^{1/2}.$$
The Lindeberg condition \cite[Eq.~27.16]{billingsley1995probability} is the following
\begin{equation}
\label{eq:lindeberg-condition}
\lim_{n\to\infty} \frac{1}{s_n^2} \sum_{k=1}^n \int_{|X_{|E|,e}|\ge \varepsilon s_n} \lvert X_{n,k}\rvert^2 \dint P_{n,k} = 0, \quad\text{ for all } \varepsilon>0,
\end{equation}
where $P_{n,k}$ is the distribution of $X_{n,k}$.
We now show that \eqref{eq:lindeberg-condition} holds for $X_{|E|,e}=w_es(e)$. 

\begin{enumproof}
\item
Under assumptions~\ref{hp:null} and \ref{hp:median}, by Lemma~\ref{lemma:sign-independency} $\{X_{|E|,e}\st e\in E\}$ are independent and, by Lemma~\ref{lemma:sign-dist}, $\EE[X_{|E|,e}]=0$ with
$$
\EE[X_{|E|,e}^2]=w_e^2 \EE[s(e)^2] =w_e^2 \EE[1]= w_e^2.
$$
So $s_{|E|}^2=\sum_{e\in E} w_e^2\cdot 1 = W_2$. Substituting the quantities we have just calculated in \eqref{eq:lindeberg-condition}, we can express the Lindeberg condition as follows
\begin{align*}
     \frac{1}{s_{|E|}^{2}}\sum_{e\in E} &\int_{|X_{|E|,e}|\ge \varepsilon s_{|E|}} |w_e\,s(e)|^{2} \dint P_{s(e)} 
    = \frac{1}{s_{|E|}^{2}}\sum_{e\in E} \int_{|w_e||s(e)|\ge \varepsilon s_{|E|}} w_e^2 \dint P_{s(e)} 
\\ &= \frac{1}{s_{|E|}^{2}}\sum_{e\in E} w_e^2 I(|w_e|\ge \varepsilon s_{|E|})   
\end{align*}

\item
From \ref{hp:weights}, $s_{|E|}^2=W_2 \to \infty$  as $|E|\to\infty$. Therefore,
for every $\varepsilon>0$ there is a number $N(\varepsilon, w_+)$ of edges such that $w_+<\varepsilon s_{N(\varepsilon, w_+)}$, and for which $I(|w_e|\ge \varepsilon s_{N(\varepsilon,w_+)})=0$ for all $e$. We conclude that \eqref{eq:lindeberg-condition} holds true and, in turn, $C_G(\X)$ weakly converges to $\mc N(\EE[C_G(\X)],\var[C_G(\X)])$ by Theorem~27.3 in \cite{billingsley1995probability}.

\item
Finally, observe that the expected value of $C_G(\X)$ is
\begin{align*}
\EE[C_G(\X)]
   &= \sum_{e \in E} w_e \EE[s(e)] W_2^{-1/2} = 0
\end{align*}
thanks to Lemma~\ref{lemma:sign-dist}, and the variance is
\begin{align*}
\var[C_G(\X)] &= \frac{1}{W_2} \var\left[\sum_e w_e s(e)\right] 
\\\text{(Lem.~\ref{lemma:sign-independency}) }
&= \frac{1}{W_2} \sum_{e\in E} w_e^2 (\EE[s(e)^2]-\EE[s(e)]^2) 
\\
&= \frac{1}{W_2} W_2 (1 - 0) = 1.
\end{align*}
\end{enumproof}
We concluded the thesis of Theorem~\ref{theo:test-stat}.

\end{document}